\newcommand{\argmin}{\operatorname*{argmin}}
\newtheoremstyle{custom_plain}
    {2px} 
    {1px} 
    {\itshape} 
    {} 
    {\bfseries} 
    {.} 
    {.5em} 
    {} 
\newtheoremstyle{custom_definition}
    {2px} 
    {1px} 
    {} 
    {} 
    {\bfseries} 
    {.} 
    {.5em} 
    {} 
\theoremstyle{custom_plain}
\newtheorem{theorem}[]{Theorem}
\newtheorem{lemma}[theorem]{Lemma}
\theoremstyle{custom_definition}
\newcommand{\highlight}[2]{{#2}}
\titlespacing{\section}{0px}{1px}{0px}
\titlespacing{\subsection}{0px}{1px}{0px}
\titlespacing{\subsubsection}{0px}{0px}{0px}
\titlespacing{\paragraph}{0px}{0px}{1em}
\title{Neural Architecture Dilation for Adversarial Robustness}
\author{%
  Yanxi Li $^1$, Zhaohui Yang $^{2,3}$, Yunhe Wang $^2$, Chang Xu $^1$\\
  $^1$ School of Computer Science, University of Sydney, Australia \\
  $^2$ Noah’s Ark Lab, Huawei Technologies, China\\
  $^3$ Key Lab of Machine Perception (MOE), Department of Machine Intelligence,\\ Peking University, China\\
  \texttt{yali0722@uni.sydney.edu.au, zhaohuiyang@pku.edu.cn,}\\
  \texttt{yunhe.wang@huawei.com, c.xu@sydney.edu.au}\\
}
\begin{document}

\maketitle

\setlength{\abovedisplayskip}{1px}
\setlength{\belowdisplayskip}{1px}
\setlength{\abovedisplayshortskip}{2px}
\setlength{\belowdisplayshortskip}{2px}
\setlength{\parskip}{2.5px}

\vspace{-2em}
\begin{abstract}
    With the tremendous advances in the architecture and scale of convolutional neural networks (CNNs) over the past few decades, they can easily reach or even exceed the performance of humans in certain tasks.
    However, a recently discovered shortcoming of CNNs is that they are vulnerable to adversarial attacks.
    Although the adversarial robustness of CNNs can be improved by adversarial training, there is a trade-off between standard accuracy and adversarial robustness.
    From the neural architecture perspective, this paper aims to improve the adversarial robustness of the backbone CNNs that have a satisfactory accuracy. Under a minimal computational overhead, the introduction of a dilation architecture is expected to be friendly with the standard performance of the backbone CNN while pursuing adversarial robustness. Theoretical analyses on the standard and adversarial error bounds naturally motivate the proposed neural architecture dilation algorithm. Experimental results on real-world datasets and benchmark neural networks demonstrate the effectiveness of the proposed algorithm to balance the accuracy and adversarial robustness. 
\end{abstract}
\vspace{-1em}

\section{Introduction}
    
    In the past few decades, novel architecture design and network scale expansion have achieved significant success in the development of convolutional neural networks (CNN) \cite{lecun1989backpropagation, lecun1995learning, krizhevsky2012imagenet, szegedy2015going, he2016deep, zagoruyko2016wide}. These advanced neural networks can already reach or even exceed the performance of humans in certain tasks \cite{krizhevsky2009learning, russakovsky2015imagenet}.
    Despite the success of CNNs, a recently discovered shortcoming of them is that they are vulnerable to adversarial attacks. The ingeniously designed small perturbations when applied to images could mislead the networks to predict incorrect labels of the input \cite{goodfellow2014explaining}.
    This vulnerability notably reduces the reliability of CNNs in practical applications. Hence developing solutions to increase the adversarial robustness of CNNs against adversarial attacks has attracted particular attention from the researchers.
    
    Adversarial training can be the most standard defense approach, which augments the training data with adversarial examples. These adversarial examples are often generated by fast gradient sign method (FGSM) \cite{goodfellow2014explaining} or projected gradient descent (PGD) \cite{madry2017towards}. \citet{tramer2017ensemble} investigates the adversarial examples produced by a number of pre-trained models and developed an ensemble adversarial training. Focusing on the worst-case loss over a convex outer region, \citet{wong2018provable} introduces a provable robust model. There are more improvements of PGD adversarial training techniques, including Lipschitz regularization \cite{farnia2018generalizable} and curriculum adversarial training \cite{cai2018curriculum}. In a recent study by \citet{tsipras2018robustness}, there exists a trade-off between standard accuracy and adversarial robustness. After the networks have been trained to defend against adversarial attacks, their performance over natural image classification could be negatively influenced.
    TRADES \cite{zhang2019theoretically} theoretically studies this trade-off by introducing a boundary error between the natural (i.e. standard) error and the robust error.
    Instead of directly adjusting the trade-off, the friendly adversarial training (FAT) \cite{zhang2020attacks} proposes to exploit weak adversarial examples for a slight standard accuracy drop. 

    Numerous efforts have been made to defend the adversarial attacks by carefully designing various training objective functions of the networks. But less noticed is that the neural architecture actually bounds the performance of the network. Recently there are a few attempts to analyze the adversarial robustness of the neural network from the architecture perspective. For example, RACL \cite{dong2020adversarially} applies Lipschitz constraint on architecture parameters in one-shot NAS to reduce the Lipschitz constant and improve the robustness.
    RobNet \cite{guo2020meets} search for adversarially robust network architectures directly with adversarial training. Despite these studies, a deeper understanding of the accuracy and robustness trade-off from the architecture perspective is still largely missing. 
    
    In this paper, we focus on designing neural networks sufficient for both standard and adversarial classification from the architecture perspective.
    We propose \textbf{n}eural \textbf{a}rchitecture \textbf{d}ilation for \textbf{a}dversarial \textbf{r}obustness (NADAR). 
    Beginning with the backbone network of a satisfactory accuracy over the natural data, we search for a dilation architecture to pursue  a maximal robustness gain while preserving a minimal accuracy drop. 
    Besides, we also apply a FLOPs-aware approach to optimize the architecture, which can prevent the architecture from increasing the computation cost of the network too much.
    We theoretically analyze our dilation framework and prove that our constrained optimization objectives can effectively achieve our motivations.
    Experimental results on benchmark datasets 
    demonstrate the significance of studying the adversarial robustness from the architecture perspective and the effectiveness of the proposed algorithm. 

\section{Related Works}

    \subsection{Adversarial Training}
    
        FGSM \cite{goodfellow2014explaining} claims that the adversarial vulnerability of neural networks is related to their linear nature instead of the nonlinearity and overfitting previously thought. A method to generate adversarial examples for adversarial training is proposed based on such a perspective to reduce the adversarial error.
        PGD \cite{madry2017towards} studies the adversarial robustness from the view of robust optimization. A first-order gradient-based method for iterative adversarial is proposed.
        FreeAT \cite{shafahi2019adversarial} reduces the computational overhead of generating adversarial examples. The gradient information in network training is recycled to generate adversarial training. With this gradient reusing, it achieves 7 to 30 times of speedup.
        
        However, the adversarial robustness comes at a price.
        \citet{tsipras2018robustness} reveals that there is a trade-off between the standard accuracy and adversarial robustness because of the difference between features learned by the optimal standard and optimal robust classifiers.
        TRADES \cite{zhang2019theoretically} theoretically analyzes this trade-off. A boundary error is identified between the standard and adversarial error to guide the design of defense against adversarial attacks. As a solution, a tuning parameter $\lambda$ is introduced into their framework to adjust the trade-off.
        The friendly adversarial training (FAT) \cite{zhang2020attacks} generates weak adversarial examples that satisfy a minimal margin of loss. The miss-classified adversarial examples with the lowest classification loss are selected for adversarial training.
    
    \subsection{Neural Architecture Search}

        NAS aims to automatically design neural architectures for networks. Early NAS methods \cite{baker2016designing, zoph2018learning, liu2017hierarchical, real2017large} are computationally intensive, requiring hundreds or thousands of GPU hours because of the demand of training and evaluation of a large number of architectures. 
        Recently, the differentiable and one-shot NAS approaches \citet{liu2018darts} and \citet{xu2019pc} propose to construct a one-shot supernetwork and optimize the architecture parameter with gradient descent, which reduces the computational overhead dramatically.
        Differentiable neural architecture search allows joint and differentiable optimization of model weights and the architecture parameter using gradient descent.
        Due to the parallel training of multiple architectures, DARTS is memory consuming. Several follow-up works aim to reduce the memory cost and improve the efficiency of NAS.
        One remarkable approach among them is PC-DARTS \cite{xu2019pc}. It utilizes a partial channel connections technique, where sub-channels of the intermediate features are sampled to be processed. Therefore, memory usage and computational cost are reduced.

        Considering adversarial attacks in the optimization of neural architectures can help designing networks that are inherently resistant to adversarial attacks.
        RACL \cite{dong2020adversarially} applied a constraint on the architecture parameter in differentiable one-shot NAS to reduce the Lipschitz constant.
        Previous works \cite{cisse2017parseval, weng2018evaluating} have shown that a smaller Lipschitz constant always corresponds to a more robust network.
        It is, therefore, effective to improve the robustness of neural architectures by constraining their Lipschitz constant.
        RobNet \cite{guo2020meets} directly optimizes the architecture by adversarial training with PGD.

    \section{Methodology}

        The adversarial training can be considered as a minimax problem, where the adversarial perturbations are generated to attack the network by maximizing the classification loss, and the network is optimized to defend against such attacks:
        \begin{equation}
            \min_{f} \mathbb{E}_{(\boldsymbol{x}, y) \sim \mathcal{D}} \left[ \max_{\boldsymbol{x}' \in B_p(\boldsymbol{x}, \varepsilon)} \ell (y, f(\boldsymbol{x}')) \right], \label{eq:method:adv-loss}
        \end{equation}
        where $\mathcal{D}$ is the distribution of the natural examples $\boldsymbol{x}$ and the labels $y$, $B_p(\boldsymbol{x}, \varepsilon) = \{\boldsymbol{x}': \|\boldsymbol{x} - \boldsymbol{x}'\|_p \leq \varepsilon\}$ defines the set of allowed adversarial examples $\boldsymbol{x}'$ within the scale $\varepsilon$ of small perturbations under $l_p$ normalization, and $f$ is the network under attack.
    
    \subsection{Robust Architecture Dilation}
    
        \begin{wrapfigure}{r}{0.4\linewidth}
        \vspace{-1em}
            \centering
            \includegraphics[width=0.9\linewidth]{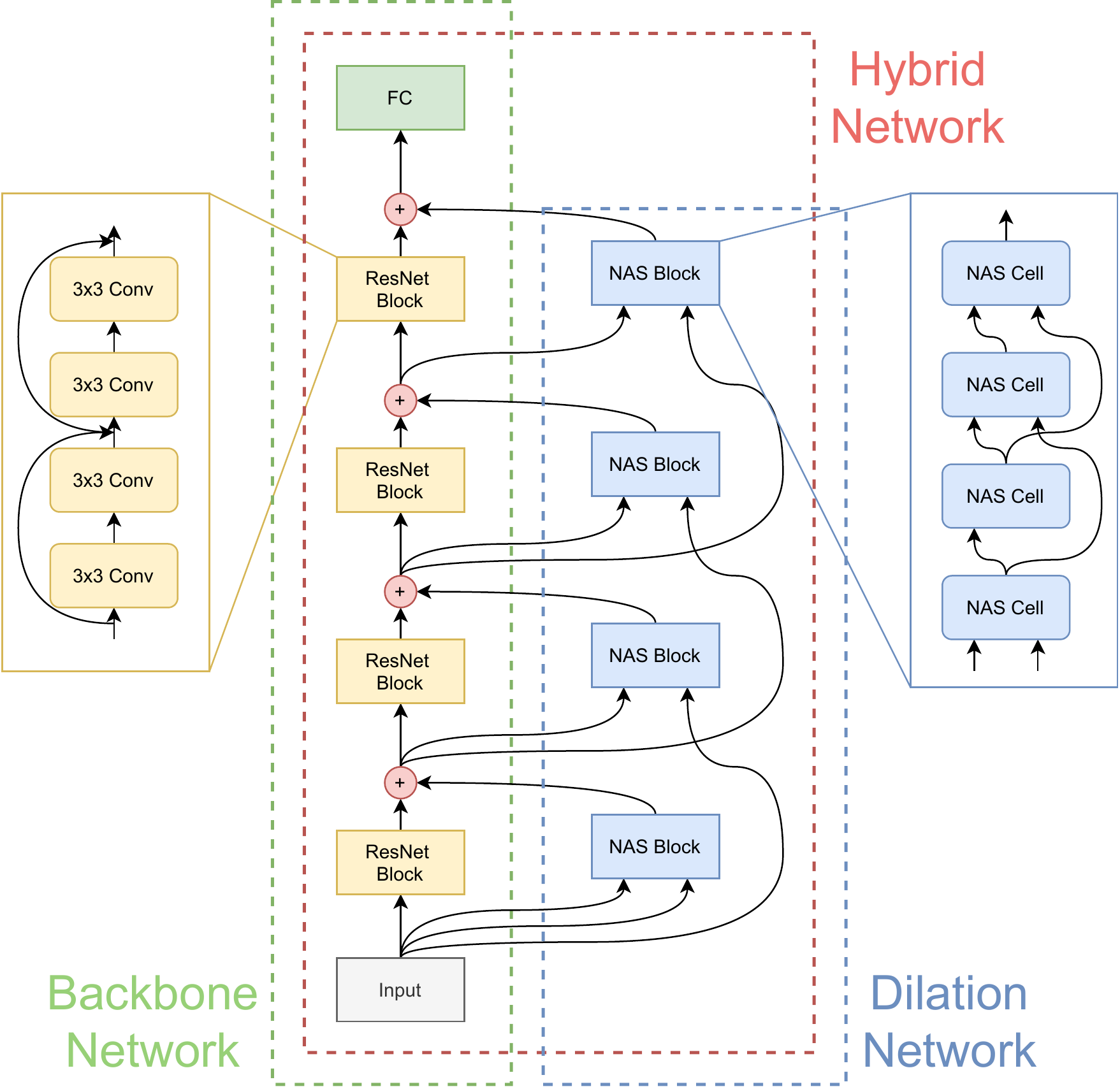}
            \caption{The overall structure of a NADAR hybrid network.}
            \label{fig:overall-structure}
        \vspace{-2em}
        \end{wrapfigure}
        The capacity of deep neural network has been demonstrated to be critical to its adversarial robustness.
        \highlight{red}{\citet{madry2017towards} finds capacity plays an important role in adversarial robustness, and networks require larger capacity for adversarial than standard tasks. \citet{tsipras2018robustness} suggests simple classifier for standard tasks cannot reach good performance on adversarial tasks.}
        However, it remains an open question to use the minimal increase of network capacity in exchange for the adversarial robustness.
        
        Suppose that we have a backbone network $f_b$ that can achieve a satisfactory accuracy on the natural data. To strengthen its adversarial robustness without hurting the standard accuracy, we propose to increase the capacity of this backbone network $f_b$ by dilating it with a network $f_{d}$, whose architecture and parameter will be optimized within the adversarial training. 
        
        The backbone network $f_b$ is split into blocks. A block $f_{b}^{(l)}$ is defined as a set of successive layers in the backbone with the same resolution.
        For a backbone with $L$ blocks, i.e. $f_b = \{f_{b}^{(l)}, l \in 1, \dots, L\}$,
        we attach a cell $f_{d}^{(l)}$ of the dilation network to each block $f_{b}^{(l)}$.
        Therefore, the dilation network also has $L$ cells, i.e. $f_{d} = \{f_{d}^{(l)}, l \in 1, \dots, L\}$. For the dilation architecture, we search for cells within a NASNet-like \cite{zoph2018learning} search space. In a NASNet-like search space, each cell takes two previous outputs as its inputs.
        The backbone and the dilation network are further aggregated by element-wise sum.  The overall structure of a NADAR hybrid network is as shown in Figure \ref{fig:overall-structure}.
        Formally, the hybrid network for the adversarial training is defined as:
        \begin{equation}
        f_{\mathtt{hyb}}(\boldsymbol{x}) = h
            \left( \odot_{l = 1, \dots, L} \left( f_b^{(l)}(\boldsymbol{z}_{\mathtt{hyb}}^{(l-1)}) + f_{d}^{(l)}(\boldsymbol{z}_{\mathtt{hyb}}^{(l-1)}, \boldsymbol{z}_{\mathtt{hyb}}^{(l-2)}) \right) \right),
        \end{equation}
        where $\boldsymbol{z}_{\mathtt{hyb}}^{(l)} = f_b^{(l)}(\boldsymbol{z}_{\mathtt{hyb}}^{(l-1)}) + f_{d}^{(l)}(\boldsymbol{z}_{\mathtt{hyb}}^{(l-1)}, \boldsymbol{z}_{\mathtt{hyb}}^{(l-2)})$
        \highlight{blue}{is the latent feature extracted by the backbone block and the dilation block}\highlight{red}{, and $\odot$ represents functional composition}.
        \highlight{red}{We also define a classification hypothesis $h: \boldsymbol{z}_{\mathtt{hyb}}^{(L)} \to \hat{y}$, where $\boldsymbol{z}_{\mathtt{hyb}}^{(L)}$ is the latent representation extracted by the last convolutional layer $L$, and $\hat{y}$ is the predicted label.}
        
        During search, the backbone network $f_{b}$ has a fixed architecture and is parameterized by network weights $\boldsymbol{\theta}_{b}$. The dilation network $f_{d}$ is parameterized by not only network weights $\boldsymbol{\theta}_{d}$ but also the architecture parameter $\boldsymbol{\alpha}_{d}$.
        The objective of robust architecture dilation is to optimize $\boldsymbol{\alpha}_{d}$ for the minimal adversarial loss
        \begin{align}
            \min_{\boldsymbol{\alpha}_{d}} \quad& \mathcal{L}^{\mathtt{(adv)}}_{\mathtt{valid}}(f_{\mathtt{hyb}}; \boldsymbol{\theta}_{d}^*(\boldsymbol{\alpha}_{d})), \label{eq:method:rob-arch-boost:obj-arch-valid}\\
            \textbf{s.t.} \quad& \boldsymbol{\theta}_{d}^*(\boldsymbol{\alpha}_{d}) = \argmin_{\boldsymbol{\theta}_{d}} \mathcal{L}^{\mathtt{(adv)}}_{\mathtt{train}}(f_{\mathtt{hyb}}),
            \label{eq:method:rob-arch-boost:obj-weight-train}
        \end{align}
        where $\mathcal{L}^{\mathtt{(adv)}}_{\mathtt{train}}(f_{\mathtt{hyb}})$ and $\mathcal{L}^{\mathtt{(adv)}}_{\mathtt{valid}}(f_{\mathtt{hyb}}; \boldsymbol{\theta}_{d}^*(\boldsymbol{\alpha}_{d}))$ are the adversarial losses of $f_{\mathtt{hyb}}$ (with the form of Eq. \ref{eq:method:adv-loss}) on the training set $\mathcal{D}_{\mathtt{train}}$ and the validation set $\mathcal{D}_{\mathtt{valid}}$, respectively, and $\boldsymbol{\theta}_{d}^*(\boldsymbol{\alpha}_{d})$ is the optimal network weights of $f_{d}$ depending on the current dilation architecture $\boldsymbol{\alpha}_{d}$.

    \subsection{Standard Performance Constraint}
        
        Existing works on adversarial robustness often fix the network capacity, and the increase of adversarial robustness is accompanied by the standard accuracy drop \cite{tsipras2018robustness,zhang2019theoretically}. However, in this work, we increase the capacity with dilation, which allows us to increase the robustness while maintaining a competitive standard accuracy. We reach that with a standard performance constraint on the dilation architecture.
        The constraint is achieved by comparing the standard performance of the hybrid network $f_{\mathtt{hyb}}$ to the standard performance of the backbone.
        We denote the network using the backbone only as $f_{\mathtt{bck}}$, which can be formally defined as: 
        \begin{equation}
            f_{\mathtt{bck}}(\boldsymbol{x}) = h \left( \odot_{l = 1, \dots, L} f_b^{(l)} \left(\boldsymbol{z}_{\mathtt{bck}}^{(l-1)} \right) \right),
        \end{equation}
        where $\boldsymbol{z}_{\mathtt{bck}}^{(l)} = f_b^{(l)}(\boldsymbol{z}_{\mathtt{bck}}^{(l-1)})$
        \highlight{blue}{is the latent feature extracted by the backbone block}.
        The standard model is optimized with natural examples by:
        \begin{equation}
            \min_{\boldsymbol{\theta}_{b}} \quad \mathcal{L}^{\mathtt{(std)}}(f_{\mathtt{bck}})=\mathbb{E}_{(\boldsymbol{x}, y) \sim \mathcal{D}} \left[ \ell(f_{\mathtt{bck}}(y, \boldsymbol{x})) \right].
            \label{eq:method:std-perf-const:loss-std}
        \end{equation}
        where $\mathcal{L}^{\mathtt{(std)}}$ is the standard loss.
        Similarly, we can define the standard loss $\mathcal{L}^{\mathtt{(std)}}(f_{\mathtt{hyb}})$ for the hybrid network $f_{\mathtt{hyb}}$.
        In this way, we can compare the two networks by the difference of their losses and constrain the standard loss of the hybrid network to be equal to or lower than the standard loss of the standard network:
        \begin{equation}
            \mathcal{L}^{\mathtt{(std)}}(f_{\mathtt{hyb}}) - \mathcal{L}^{\mathtt{(std)}}(f_{\mathtt{bck}}) \leq 0.
            \label{eq:method:std-const:std-const-loss}
        \end{equation}
        We do not directly optimize the dilation architecture on the standard task, because it is introduced to capture the difference between the standard and adversarial tasks to improve the robustness of the standard trained backbone. It is unnecessary to let both the backbone network and the dilation network to learn the standard task.
    
    \subsection{FLOPs-Aware Architecture Optimization}
    
        By enlarging the capacity of networks,  we can improve the robustness, but a drawback is that the model size and computation cost raises.
        We want to obtain the largest robustness improvement with the lowest computation overhead. Therefore, a computation budget constraint on architecture search is applied. As we are not targeting at any specific platform, the number of \textbf{fl}oating point \textbf{op}eration\textbf{s} (FLOPs) in the architecture instead of the inference latency is considered.
        The FLOPs is calculated by counting the number of multi-add operations in the network.
        
        We use a differentiable manner to optimize the dilation architecture.
        In differentiable NAS, a directed acyclic graph (DAG) is constructed as the supernetwork, whose nodes are latent representations and edges are operations.
        Given that the adversarial training is computationally intensive, to reduce the search cost, a partial channel connections technique proposed by \citet{xu2019pc} is utilized.
        
        During search, operation candidates for each edge are weighted summed with a softmax distribution of the architecture parameter $\boldsymbol{\alpha}$:
        \begin{equation}
            \bar{o}^{(i, j)} (\boldsymbol{x}_i) = 
            (1 - S_{i, j}) * \boldsymbol{x}_i
            + \sum_{o \in \mathcal{O}} \left( 
            \frac{\exp (\boldsymbol{\alpha}^{(o)}_{i, j})}{\sum_{o' \in \mathcal{O}} \exp (\boldsymbol{\alpha}^{(o')}_{i, j})}
            \cdot o(S_{i, j} * \boldsymbol{x}_i) \right),
            \label{eq:method:flops-aware-NAS:alpha-sum}
        \end{equation}
        where $\mathcal{O}$ is a set of operation candidates, $\boldsymbol{x}_i$ is the output of the $i$-th node, and $S_{i, j}$ is binary mask on edge $(i, j)$ for partial channel connections. \highlight{blue}{The binary mask $S_{i, j}$ is set to 1 or 0 to let the channel be selected or bypassed, respectively}.
        Besides the architecture parameter $\boldsymbol{\alpha}$, the partial channel connections technique also introduces a edge normalization weight $\boldsymbol{\beta}$:
        \begin{equation}
            \boldsymbol{I}^{(j)} = \sum_{i < j} \left(
            \frac{\exp (\boldsymbol{\beta}_{i, j})}{\sum_{i' < j} \exp (\boldsymbol{\beta}_{i', j})}
            \cdot \bar{o}^{(i, j)} (\boldsymbol{x}_i) \right),
            \label{eq:method:flops-aware-NAS:beta-sum}
        \end{equation}
        where $\boldsymbol{I}^{(j)}$ is the $j$-th node.
        The edge normalization can stabilize differentiable NAS by reducing fluctuation in edge selection after search.
        
        Considering Eqs. \ref{eq:method:flops-aware-NAS:alpha-sum} and \ref{eq:method:flops-aware-NAS:beta-sum}, the expected FLOPs of the finally obtained discrete architectures from the one-shot supernetwork can be estimated according to $\boldsymbol{\alpha}$ and $\boldsymbol{\beta}$. We calculate the weighted sum of FLOPs of the operation candidates with the identical softmax distributions in Eqs. \ref{eq:method:flops-aware-NAS:alpha-sum} and \ref{eq:method:flops-aware-NAS:beta-sum}, which can naturally lead to an expectation.
        Therefore, the expected FLOPs of node $\boldsymbol{I}^{(j)}$ can be calculated by:
        \begin{equation}
            \operatorname{FLOPs} (\boldsymbol{I}^{(j)}) =
            \sum_{i < j}
            \frac{\exp (\boldsymbol{\beta}_{i, j})}{\sum_{i' < j} \exp (\boldsymbol{\beta}_{i', j})}
            \cdot \sum_{o \in \mathcal{O}}
            \frac{\exp (\boldsymbol{\alpha}^{(o)}_{i, j})}{\sum_{o' \in \mathcal{O}} \exp (\boldsymbol{\alpha}^{(o')}_{i, j})}
            \cdot \operatorname{FLOPs} (o).
        \end{equation}
        After that, the FLOPs of the dilation network $\operatorname{FLOPs} (f_{d})$ can be estimated by taking the sum of the FLOPs of all the nodes and cells.
        The objective function in Eq. \ref{eq:method:rob-arch-boost:obj-arch-valid} can be rewritten with the FLOPs constraint as:
        \begin{equation}
            \min_{\boldsymbol{\alpha}_{d}} \quad \gamma \log (\operatorname{FLOPs}(f_{d}))^{\tau} \cdot \mathcal{L}_{\mathtt{valid}}^{\mathtt{(adv)}}(f_{\mathtt{hyb}}),
            \label{eq:method:flops-aware-NAS:obj}
        \end{equation}
        where $\gamma$ and $\tau$ are two coefficient terms. $\tau$ controls the sensitivity of the objective function to the FLOPs constraint, and $\gamma$ scales the constraint to a reasonable range (e.g. around $1.0$).
    
    \subsection{Optimization}
        
        We reformulate the bi-level form optimization problem defined in Eqs \ref{eq:method:rob-arch-boost:obj-arch-valid} and \ref{eq:method:rob-arch-boost:obj-weight-train} into a constrained optimization form.
        Combining with the standard performance constraint in Eq. \ref{eq:method:std-const:std-const-loss} and the FLOPs-aware objectives in Eq. \ref{eq:method:flops-aware-NAS:obj}, we have
        \begin{align}
            \min_{\boldsymbol{\alpha}_{d}} \quad& 
                \gamma \log (\operatorname{FLOPs}(f_{d}))^{\tau} \cdot \mathcal{L}_{\mathtt{valid}}^{\mathtt{(adv)}}(f_{\mathtt{hyb}}; \boldsymbol{\theta}_{d}^*(\boldsymbol{\alpha}_{d})),
            \label{eq:method:flop-const-optim:formal-def-1-1} \\
            \textbf{s.t.} \quad& \mathcal{L}^{\mathtt{(std)}}_{\mathtt{valid}}(f_{\mathtt{hyb}}) - \mathcal{L}^{\mathtt{(std)}}_{\mathtt{valid}}(f_{\mathtt{bck}}) \leq 0, \label{eq:method:flop-const-optim:formal-def-1-2} \\
            & \boldsymbol{\theta}_{d}^*(\boldsymbol{\alpha}_{d}) = \argmin_{\boldsymbol{\theta}_{d}} \mathcal{L}^{\mathtt{(adv)}}_{\mathtt{train}}(f_{\mathtt{hyb}}),
            \quad \textbf{s.t.} \quad
            \mathcal{L}^{\mathtt{(std)}}_{\mathtt{train}}(f_{\mathtt{hyb}}) - \mathcal{L}^{\mathtt{(std)}}_{\mathtt{train}}(f_{\mathtt{bck}}) \leq 0. \label{eq:method:flop-const-optim:formal-def-2-2}
        \end{align}
        
        To solve the constrained architecture optimization problem, we apply a common method for constrained optimization, namely alternating direction method of multipliers (ADMM).
        To apply ADMM, the objective function needs to be reformulate as an augmented Lagrangian function.
        We first deal with the upper-level optimization of  the architecture parameter $\boldsymbol{\alpha}_{d}$:
        \begin{align}
            &L(\{\boldsymbol{\alpha}_{d}\}, \{\lambda_1\}) =
            \gamma \log (\operatorname{FLOPs}(f_{d}))^{\tau} \cdot \mathcal{L}_{\mathtt{valid}}^{\mathtt{(adv)}}(f_{\mathtt{hyb}})
            + \lambda_1 \cdot c_1 + \frac{\rho}{2} \|\max\{0, c_1\}\|^2_2
            \label{eq:method:admm:aug-lag-func-arch}\\
            &\textbf{s.t.} \quad c1 = \mathcal{L}_{\mathtt{valid}}^{\mathtt{(std)}}(f_{\mathtt{hyb}}) - \mathcal{L}_{\mathtt{valid}}^{\mathtt{(std)}}(f_{\mathtt{bck}}),
            \label{eq:method:admm:const-arch}
        \end{align}
        where $\lambda_1$ is the Lagrangian multiplier, and $\rho \in \mathbb{R}_+$ is a positive number predefined in ADMM.
        We update $\boldsymbol{\alpha}_{d}$ and $\lambda_1$ alternately with:
        \begin{align}
            \boldsymbol{\alpha}_{d}^{(t+1)} &\gets \boldsymbol{\alpha}_{d}^{(t)} - \eta_1 \nabla L(\{\boldsymbol{\alpha}_{d}^{(t)}\}, \{\lambda_1^{(t)}\})
            \label{eq:method:admm:update-alpha_w}\\
            \lambda_1^{(t+1)} &\gets \lambda_1^{(t)} + \rho \cdot c_1,
            \label{eq:method:admm:update-lambda_1}
        \end{align}
        where $\eta_1$ is a learning rate for architecture.
        Similarly, the lower-level optimization problem of network weights $\boldsymbol{\theta}_{d}$ as an augmented Lagrangian function can be defined as:
        \begin{align}
            &L(\{\boldsymbol{\theta}_{d}\}, \{\lambda_2\}) =
            \mathcal{L}_{\mathtt{train}}^{\mathtt{(adv)}}(f_{\mathtt{hyb}})
            + \lambda_2 \cdot c_2 + \frac{\rho}{2} \|\max\{0, c_2\}\|^2_2
            \label{eq:method:admm:aug-lag-func-weights}\\
            &\textbf{s.t.} \quad c2 = \mathcal{L}_{\mathtt{train}}^{\mathtt{(std)}}(f_{\mathtt{hyb}}) - \mathcal{L}_{\mathtt{train}}^{\mathtt{(std)}}(f_{\mathtt{bck}}),
            \label{eq:method:admm:const-weights}
        \end{align}
        where $\lambda_2$ is the Lagrangian multiplier.
        Similarly, we can update $\boldsymbol{\theta}_{d}$ and $\lambda_2$ with the same alternate manner:
        \begin{align}
            \boldsymbol{\theta}_{d}^{(t+1)} &\gets \boldsymbol{\theta}_{d}^{(t)} - \eta_2 \nabla L(\{\boldsymbol{\theta}_{d}^{(t)}\}, \{\lambda_2^{(t)}\}),
            \label{eq:method:admm:update-theta_w}\\
            \lambda_2^{(t+1)} &\gets \lambda_2^{(t)} + \rho \cdot c_2,
            \label{eq:method:admm:update-lambda_2}
        \end{align}
        where $\eta_2$ is the learning rate for network weights.

\section{Theoretical Analysis}

    In this section, we provide theoretical analysis of our proposed NADAR.
    As there are two major goals in our optimization problem, i.e., the standard performance constraint and the adversarial robustness, this analysis is also twofold.
    Firstly, a standard error bound of NADAR is analyzed.
    We demonstrate that the standard error of the dilated adversarial network can be bounded by the standard error of the backbone network and our standard performance constraint.
    Secondly, we compare the adversarial error of the dilated adversarial network and the standard error of the backbone standard network.
    We demonstrate that the adversarial performance can be improved by adding a dilation architecture to the backbone, even if the backbone is fixed.
    These two error bounds can naturally motivate the optimization problem in Eqs. \ref{eq:method:flop-const-optim:formal-def-1-1} and \ref{eq:method:flop-const-optim:formal-def-1-2}.
    Detailed proofs are provided in our supplementary material.
    \highlight{blue}{Besides, through this analysis, we want to reveal two \textbf{remarks}:
    (1) enlarging the backbone network with dilation can improve its performance, which proves the validity of our neural architecture dilation;
    (2) the dilation architecture should be consistent with the backbone on clean samples and samples that are insensitive to attacks, which directly inspires our standard performance constraint.}

        We discuss the binary classification case for a simplification, where the label space is $\mathcal{Y} = \{-1, +1\}$. The obtained theoretical results can also be generalized to the multi-class classification case.
        A binary classification \textit{hypothesis} $h \in \mathcal{H}$ is defined as a mapping $h: \mathcal{X} \mapsto \mathbb{R}$, where $\mathcal{H}$ is a hypothesis space, and $\mathcal{X}$ is an input space of natural examples.
        The output of the hypothesis is a real value score. The predicted label can be obtained from the score by applying the sign function $\operatorname{sign}(\cdot)$ on it.
        Denote the backbone hypothesis as $h_{b}$. By further investigating the influence of the dilation architecture, the hypothesis of the resulting hybrid network can be defined as $h_{\mathtt{hyb}}(\boldsymbol{x}) = h_{b}(\boldsymbol{x}) + h_{d}(\boldsymbol{x})$, where $h_{d}$ stands for the change resulting from the dilation architecture. The standard model corresponds to a hypothesis $h_{\mathtt{bck}}(\boldsymbol{x}) = h_{b}(\boldsymbol{x})$.
        We further define the \textit{standard error} of a hypothesis $h$ as 
        \begin{equation}
            R_{\mathtt{std}}(h) := \mathbb{E} \left[ \mathbf{1}\{ \operatorname{sign}(h(\boldsymbol{x})) \neq y\} \right],
            \label{eq:theoretical:pre:std-error}
        \end{equation}
        and the \textit{adversarial error} of it as
        \begin{equation}
            R_{\mathtt{adv}}(h) := \mathbb{E} \left[ \mathbf{1}\{\exists \boldsymbol{x}' \in B_p(x, \varepsilon),\; 
            \textbf{s.t.} \; \operatorname{sign}(h(\boldsymbol{x}')) \neq y\} \right],
        \label{eq:theoretical:pre:adv-error}
        \end{equation}
        where $\mathbf{1}\{\cdot\}$ denotes the indicator function.

    \subsection{Standard Error Bound}
        
        To compare the error of two different hypotheses, we first slightly modify the error function. Eq. \ref{eq:theoretical:pre:std-error} checks the condition that $\operatorname{sign}(h(\boldsymbol{x})) \neq y$.
        Because the label space is binary and the output space of $h$ is real value, we can remove the sign function by replacing the condition with $y h(\boldsymbol{x}) \leq 0$.
        Then, by applying a simple inequality $\mathbf{1}\{y h(\boldsymbol{x}) \leq 0\} \leq e^{- y h(\boldsymbol{x})}$, we have a very useful inequality about the standard error:
        \begin{equation}
            R_{\mathtt{std}}(h) \leq \mathbb{E} \left[ e^{- y h(\boldsymbol{x})} \right].
            \label{eq:theoretical:srd-error-bound:ineq-std-error}
        \end{equation}
        Eq. \ref{eq:theoretical:srd-error-bound:ineq-std-error} can lead to our standard error bound in Theorem \ref{theorem:std-error-bound:main}.
    
        \begin{theorem} \label{theorem:std-error-bound:main}
            Let $h_{\mathtt{bck}}(\boldsymbol{x}) = h_{b}(\boldsymbol{x})$ be a standard hypothesis, $h_{\mathtt{hyb}}(\boldsymbol{x}) = h_{b}(\boldsymbol{x}) + h_{d}(\boldsymbol{x})$ be a hybrid hypothesis, and $\mathcal{R}_{\mathtt{std}}(h_{\mathtt{bck}})$ and $\mathcal{R}_{\mathtt{std}}(h_{\mathtt{hyb}})$ be the standard error of $h_{\mathtt{bck}}$ and $h_{\mathtt{hyb}}$, respectively. For any mapping $h_{b}, h_{d}: \mathcal{X} \mapsto \mathbb{R}$, we have
            \begin{equation}
                \mathcal{R}_{\mathtt{std}}(h_{\mathtt{hyb}}) \leq \mathcal{R}_{\mathtt{std}}(h_{\mathtt{bck}}) + \mathbb{E} \left[ e^{- h_{b}(\boldsymbol{x}) h_{d}(\boldsymbol{x})} \right],
            \end{equation}
            where $\boldsymbol{x} \in \mathcal{X}$ is the input.
        \end{theorem}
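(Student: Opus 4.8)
The plan is to bound the misclassification indicator of the hybrid hypothesis by splitting the input space according to whether the backbone already errs. Writing the standard error in its equivalent threshold form $\mathcal{R}_{\mathtt{std}}(h) = \mathbb{E}\left[\mathbf{1}\{y h(\boldsymbol{x}) \leq 0\}\right]$, I would begin from the pointwise set inclusion
\[
\{y h_{\mathtt{hyb}}(\boldsymbol{x}) \leq 0\} \subseteq \{y h_b(\boldsymbol{x}) \leq 0\} \cup \{y h_b(\boldsymbol{x}) > 0,\; y h_{\mathtt{hyb}}(\boldsymbol{x}) \leq 0\},
\]
which yields $\mathbf{1}\{y h_{\mathtt{hyb}} \leq 0\} \leq \mathbf{1}\{y h_b \leq 0\} + \mathbf{1}\{y h_b > 0,\; y h_{\mathtt{hyb}} \leq 0\}$. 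Taking expectations, the first term is exactly $\mathcal{R}_{\mathtt{std}}(h_{\mathtt{bck}})$ because $h_{\mathtt{bck}} = h_b$, so the whole task reduces to controlling the expectation of the second indicator, i.e.\ the region where the backbone is correct but the dilation flips the prediction.

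The key step is a sign argument on that second region. On the event $\{y h_b > 0,\; y h_{\mathtt{hyb}} \leq 0\}$ we have $y h_b > 0$, and since $h_{\mathtt{hyb}} = h_b + h_d$ we get $y h_d = y h_{\mathtt{hyb}} - y h_b \leq -y h_b < 0$; both relations are strict, so neither $h_b$ nor $h_d$ vanishes there. Multiplying them and using $y^2 = 1$ gives $h_b(\boldsymbol{x}) h_d(\boldsymbol{x}) = (y h_b)(y h_d) < 0$, hence $e^{-h_b(\boldsymbol{x}) h_d(\boldsymbol{x})} > 1$, so on this event the indicator is dominated by $e^{-h_b h_d}$; off the event the indicator is zero while $e^{-h_b h_d} \geq 0$. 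This establishes the pointwise bound $\mathbf{1}\{y h_b > 0,\; y h_{\mathtt{hyb}} \leq 0\} \leq e^{-h_b(\boldsymbol{x}) h_d(\boldsymbol{x})}$, and taking expectations bounds the second term by $\mathbb{E}\left[e^{-h_b h_d}\right]$.

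Combining the two bounds gives precisely $\mathcal{R}_{\mathtt{std}}(h_{\mathtt{hyb}}) \leq \mathcal{R}_{\mathtt{std}}(h_{\mathtt{bck}}) + \mathbb{E}\left[e^{-h_b(\boldsymbol{x}) h_d(\boldsymbol{x})}\right]$. The main obstacle is getting the second-region argument exactly right: one must confirm that a sign disagreement between $h_b$ and $h_d$ is \emph{forced} precisely when the backbone classifies correctly but the hybrid does not, so that the label $y$ cancels out and the clean, label-free exponential $e^{-h_b h_d}$ emerges. The exponential slack inequality $\mathbf{1}\{t \leq 0\} \leq e^{-t}$ from Eq.~\eqref{eq:theoretical:srd-error-bound:ineq-std-error} could alternatively be invoked to dominate the second indicator, but the direct sign argument is both tighter and more transparent, and it is what makes the final bound depend on the product $h_b h_d$ rather than on $h_d$ alone.
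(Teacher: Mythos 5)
Your proof is correct, and it uses the same basic decomposition as the paper: split the hybrid's error event into the part where the backbone already errs (contributing $\mathcal{R}_{\mathtt{std}}(h_{\mathtt{bck}})$) and the part where the backbone is correct but the dilation flips the prediction. Where you diverge is in how the second term is controlled. The paper bounds $\mathbf{1}\{h_{\mathtt{hyb}}(\boldsymbol{x})\, h_{\mathtt{bck}}(\boldsymbol{x}) \leq 0\}$ by the exponential surrogate $e^{-h_{\mathtt{hyb}}(\boldsymbol{x}) h_{\mathtt{bck}}(\boldsymbol{x})}$, factorizes it as $e^{-h_b(\boldsymbol{x})^2} e^{-h_b(\boldsymbol{x}) h_d(\boldsymbol{x})}$, and then discards the factor $e^{-h_b(\boldsymbol{x})^2} \in (0,1]$; you instead argue directly that on the event $\{y h_b(\boldsymbol{x}) > 0,\; y h_{\mathtt{hyb}}(\boldsymbol{x}) \leq 0\}$ the signs of $h_b$ and $h_d$ are forced to disagree, so $h_b(\boldsymbol{x}) h_d(\boldsymbol{x}) < 0$ and $e^{-h_b(\boldsymbol{x}) h_d(\boldsymbol{x})} > 1$ dominates the indicator pointwise, with no surrogate needed for that term. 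Both routes land on the identical bound; yours is more elementary and makes transparent \emph{why} the label $y$ cancels and the bound depends on the product $h_b h_d$, while the paper's surrogate-and-factorize route is the one that generalizes mechanically to the adversarial case in Theorem \ref{theorem:adv-error-bound:main}. A side benefit of your version: your opening step is a genuine pointwise inequality between indicators, whereas the paper's first line asserts the error difference \emph{equals} the disagreement-on-correct-backbone term, which is really only an upper bound (the hybrid may also correct backbone errors); your formulation avoids that imprecision.
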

        
        Theorem \ref{theorem:std-error-bound:main} illustrates that the standard performance of the hybrid network is bounded by the standard performance of the backbone network and the sign disagreement between $h_{b}(\boldsymbol{x})$ and $h_{d}(\boldsymbol{x})$.
        \highlight{blue}{This reflects our \textbf{remark (2)}.}
        If the backbone accurately predicts the label of the natural data $x$, $h_{d}(\boldsymbol{x})$ shall make the same category prediction, which implies that the prediction by the hybrid hypothesis $h_{\mathtt{hyb}}(\boldsymbol{x}) = h_{b}(\boldsymbol{x}) + h_{d}(\boldsymbol{x})$ can be strengthened and would not lead to a worse result than that of the stand hypothesis.
        \highlight{blue}{To reach such objective, it naturally links with the standard performance constraint proposed in Eq. \ref{eq:method:std-const:std-const-loss} and applied in Eq. \ref{eq:method:flop-const-optim:formal-def-1-2}.}
    
    \subsection{Adversarial Error Bound}

        Similar to Eq. \ref{eq:theoretical:srd-error-bound:ineq-std-error}, we can have an inequality about the adversarial error:
        \begin{equation}
            R_{\mathtt{adv}}(h) \leq \mathbb{E} \left[ \max_{\boldsymbol{x}' \in B_p(x, \varepsilon)} e^{-y h(\boldsymbol{x}')} \right],
            \label{eq:theoretical:srd-error-bound:ineq-adv-error}
        \end{equation}
        based on which we can derive the following Lemma \ref{lemma:adv-error-bound:adv-boundary}.
        
        \begin{lemma} \label{lemma:adv-error-bound:adv-boundary}
            For any mapping $h: \mathcal{X} \mapsto \mathbb{R}$, we have
            \begin{equation}
                \mathbb{E} \left[ \max_{\boldsymbol{x}' \in B_p(x, \varepsilon)} e^{-y h(\boldsymbol{x}')} \right] \leq \mathbb{E} \left[ \max_{\boldsymbol{x}' \in B_p(x, \varepsilon)} e^{-y h(\boldsymbol{x})} e^{-h(\boldsymbol{x}) h(\boldsymbol{x}')} \right],
            \end{equation}
            where $\boldsymbol{x} \in \mathcal{X}$ is the input, $y \in \{-1, +1\}$ is the corresponding label, and $\varepsilon$ is the bound of allowed adversarial perturbation.
        \end{lemma}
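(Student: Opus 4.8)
The plan is to drive the inequality inside both the expectation and the maximum, so that the entire claim reduces to a single scalar inequality in $(\boldsymbol{x}, y, \boldsymbol{x}')$. First I would invoke monotonicity of the expectation: since $\mathbb{E}[\cdot]$ preserves the order $\leq$, it suffices to prove, for almost every $(\boldsymbol{x}, y)$, that $\max_{\boldsymbol{x}' \in B_p(x,\varepsilon)} e^{-y h(\boldsymbol{x}')} \leq \max_{\boldsymbol{x}' \in B_p(x,\varepsilon)} e^{-y h(\boldsymbol{x})} e^{-h(\boldsymbol{x}) h(\boldsymbol{x}')}$. Because both maxima range over the same ball $B_p(\boldsymbol{x}, \varepsilon)$, I would let $\boldsymbol{x}^\star$ attain the left-hand maximum and note that the right-hand maximum is at least the right-hand expression evaluated at that same $\boldsymbol{x}^\star$. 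This eliminates both maxima and leaves the pointwise inequality $e^{-y h(\boldsymbol{x}^\star)} \leq e^{-y h(\boldsymbol{x})} e^{-h(\boldsymbol{x}) h(\boldsymbol{x}^\star)}$.

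Next I would take logarithms, which is legitimate because $\log$ is increasing and every factor is a positive exponential; the pointwise inequality becomes the additive exponent bound $-y h(\boldsymbol{x}^\star) \leq -y h(\boldsymbol{x}) - h(\boldsymbol{x}) h(\boldsymbol{x}^\star)$, i.e. $y\big(h(\boldsymbol{x}) - h(\boldsymbol{x}^\star)\big) + h(\boldsymbol{x}) h(\boldsymbol{x}^\star) \leq 0$. Using $y^2 = 1$ (valid since $y \in \{-1,+1\}$), I would rewrite the left-hand side exactly as $(h(\boldsymbol{x}) - y)\big(h(\boldsymbol{x}^\star) + y\big) + 1$, so the target becomes the compact condition $(h(\boldsymbol{x}) - y)\big(h(\boldsymbol{x}^\star) + y\big) \leq -1$. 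This factored form is the cleanest way to expose the sign structure: the factor $h(\boldsymbol{x}) - y$ measures the clean margin of the backbone at $\boldsymbol{x}$, while $h(\boldsymbol{x}^\star) + y$ measures how far the worst-case point $\boldsymbol{x}^\star$ has been driven toward the opposite class.

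The main obstacle is precisely this pointwise step; the reductions above are routine. The leverage I would use is that $\boldsymbol{x}^\star$ minimizes $y h(\boldsymbol{x}')$ over the ball, so it is the genuine worst-case perturbation, which forces the two factors $h(\boldsymbol{x}) - y$ and $h(\boldsymbol{x}^\star) + y$ to take opposite signs — a decision-boundary crossing — and thereby makes their product negative. Turning this sign information into the quantitative bound $\leq -1$ is the delicate part, and I expect it to require the regime where the clean and adversarial predictions genuinely disagree, $\operatorname{sign}(h(\boldsymbol{x})) \neq \operatorname{sign}(h(\boldsymbol{x}^\star))$; the bound is tightest there and degrades when the backbone is both confident and correct at $\boldsymbol{x}$, so I would either impose a margin hypothesis on $y h(\boldsymbol{x})$ or fold the residual into the standard-error term, in the spirit of the natural/boundary-error split of TRADES \cite{zhang2019theoretically}. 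Once the scalar inequality is secured, undoing the logarithm and restoring the two maxima and the expectation yields the stated bound.
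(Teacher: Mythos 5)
Your reduction to a pointwise inequality and the subsequent algebra are correct as far as they go: letting $\boldsymbol{x}^\star$ attain the left-hand maximum and taking logarithms does reduce the claim to $(h(\boldsymbol{x})-y)\bigl(h(\boldsymbol{x}^\star)+y\bigr)\leq -1$. But this final scalar inequality is not merely "delicate" --- it is false in general, so the argument cannot be closed. If $h(\boldsymbol{x})=y$ the left side equals $0>-1$; worse, taking $h(\boldsymbol{x})=0$ while $h(\boldsymbol{x}')=-10y$ for some $\boldsymbol{x}'\in B_p(\boldsymbol{x},\varepsilon)$ makes the lemma's left-hand side at least $e^{10}$ while its right-hand side equals $1$, so even the expectation-level statement fails for such an $h$ unless further assumptions are imposed. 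The remedies you sketch (a margin hypothesis on $yh(\boldsymbol{x})$, or folding the residual into the standard-error term) would establish a different lemma, not the one stated, so the proposal as written has a genuine, unfixable-in-place gap.

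It is worth noting that the paper's own proof takes a different and non-pointwise route: it decomposes the adversarial indicator as the natural-error indicator plus a boundary term $\mathbf{1}\left(yh(\boldsymbol{x})>0,\;h(\boldsymbol{x})h(\boldsymbol{x}')\leq 0\right)$, applies $\mathbf{1}\{u\leq 0\}\leq e^{-u}$ to each piece, and obtains the additive bound $\mathbb{E}\left[e^{-yh(\boldsymbol{x})}\right]+\mathbb{E}\left[\max_{\boldsymbol{x}'}e^{-h(\boldsymbol{x})h(\boldsymbol{x}')}\right]$; it then asserts that this sum \emph{equals} the product form $\mathbb{E}\left[\max_{\boldsymbol{x}'}e^{-yh(\boldsymbol{x})}e^{-h(\boldsymbol{x})h(\boldsymbol{x}')}\right]$, which is not an identity (for positive reals $a+b\neq ab$ in general, and neither side dominates the other). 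So your attempt and the paper's proof break at different steps; your factorization has the virtue of making the obstruction explicit and showing that the product-form bound cannot hold for arbitrary $h$, whereas the additive bound the paper actually derives is the statement that a corrected lemma should assert.
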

        
        Lemma \ref{lemma:adv-error-bound:adv-boundary} is an inherent feature of a single hypothesis.
        We generalize it to the case of dilating $h_{\mathtt{bck}}$ to $h_{\mathtt{hyb}}$ with a dilation hypothesis $h_{d}$.
        
        \begin{theorem} \label{theorem:adv-error-bound:main}
            Let $h_{\mathtt{bck}}(\boldsymbol{x}) = h_{b}(\boldsymbol{x})$ be a standard hypothesis, $h_{\mathtt{hyb}}(\boldsymbol{x}) = h_{b}(\boldsymbol{x}) + h_{d}(\boldsymbol{x})$ be a dilated hypothesis, $\mathcal{R}_{\mathtt{std}}(h_{\mathtt{bck}})$ be the standard error of $h_{\mathtt{bck}}$, and $\mathcal{R}_{\mathtt{adv}}(h_{\mathtt{hyb}})$ be the adversarial error of $h_{\mathtt{hyb}}$. For any mapping $h_{b}, h_{d}: \mathcal{X} \mapsto \mathbb{R}$, we have
            \begin{equation}
                \mathcal{R}_{\mathtt{adv}}(h_{\mathtt{hyb}}) \leq \mathcal{R}_{\mathtt{std}}(h_{\mathtt{bck}}) + \mathbb{E} \left[ \max_{\boldsymbol{x}' B_p(x, \varepsilon)} e^{-y h_{b}(\boldsymbol{x})} \left( e^{-h_{b}(\boldsymbol{x})h_{b}(\boldsymbol{x}')} e^{-y h_{d}(\boldsymbol{x}')} - 1 \right) \right].
            \end{equation}
            where $\boldsymbol{x} \in \mathcal{X}$ is the input, $y \in \{-1, +1\}$ is the corresponding label, and $\varepsilon$ is the bound of allowed adversarial perturbation.
        \end{theorem}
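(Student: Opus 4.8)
The plan is to mirror the proof of Theorem~\ref{theorem:std-error-bound:main}, but starting from the adversarial surrogate Eq.~\ref{eq:theoretical:srd-error-bound:ineq-adv-error} and threading the single-hypothesis estimate of Lemma~\ref{lemma:adv-error-bound:adv-boundary} through the backbone part of the hybrid. First I would invoke Eq.~\ref{eq:theoretical:srd-error-bound:ineq-adv-error} with $h = h_{\mathtt{hyb}}$ to get $\mathcal{R}_{\mathtt{adv}}(h_{\mathtt{hyb}}) \leq \mathbb{E}\left[ \max_{\boldsymbol{x}' \in B_p(x,\varepsilon)} e^{-y h_{\mathtt{hyb}}(\boldsymbol{x}')} \right]$, and then use the additive structure $h_{\mathtt{hyb}} = h_b + h_d$ to factor the integrand as $e^{-y h_{\mathtt{hyb}}(\boldsymbol{x}')} = e^{-y h_b(\boldsymbol{x}')}\, e^{-y h_d(\boldsymbol{x}')}$. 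The factor $e^{-y h_d(\boldsymbol{x}')}$ is exactly the dilation contribution that should survive into the final bound, so I would leave it untouched and act only on $e^{-y h_b(\boldsymbol{x}')}$.

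Next I would apply the inequality underlying Lemma~\ref{lemma:adv-error-bound:adv-boundary} at the level of integrands with $h = h_b$, replacing $e^{-y h_b(\boldsymbol{x}')}$ by $e^{-y h_b(\boldsymbol{x})}\, e^{-h_b(\boldsymbol{x}) h_b(\boldsymbol{x}')}$; multiplying back by the untouched $e^{-y h_d(\boldsymbol{x}')}$ yields the product $e^{-y h_b(\boldsymbol{x})}\, e^{-h_b(\boldsymbol{x}) h_b(\boldsymbol{x}')}\, e^{-y h_d(\boldsymbol{x}')}$ inside the maximum, which is precisely the positive part of the stated bound. To manufacture the ``$-1$'' and expose a leading standard-error term, I would then add and subtract the quantity $e^{-y h_b(\boldsymbol{x})}$, which is constant in $\boldsymbol{x}'$ and hence commutes with the maximum: $\max_{\boldsymbol{x}'} g(\boldsymbol{x}') = e^{-y h_b(\boldsymbol{x})} + \max_{\boldsymbol{x}'}\left( g(\boldsymbol{x}') - e^{-y h_b(\boldsymbol{x})}\right)$. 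Taking expectations splits the bound into $\mathbb{E}\left[ e^{-y h_b(\boldsymbol{x})}\right]$ plus $\mathbb{E}\left[ \max_{\boldsymbol{x}'} e^{-y h_b(\boldsymbol{x})}\left(e^{-h_b(\boldsymbol{x}) h_b(\boldsymbol{x}')} e^{-y h_d(\boldsymbol{x}')} - 1\right)\right]$, the second of which is verbatim the boundary term in the theorem.

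It remains to identify the leading term $\mathbb{E}[e^{-y h_b(\boldsymbol{x})}]$ with $\mathcal{R}_{\mathtt{std}}(h_{\mathtt{bck}})$, and this is where I expect the real difficulty. The standard-error surrogate Eq.~\ref{eq:theoretical:srd-error-bound:ineq-std-error} only gives $\mathcal{R}_{\mathtt{std}}(h_{\mathtt{bck}}) \leq \mathbb{E}[e^{-y h_b(\boldsymbol{x})}]$, i.e. the inequality points the wrong way to simply substitute and close the bound with $\mathcal{R}_{\mathtt{std}}(h_{\mathtt{bck}})$ as the claim requires. To obtain the tighter leading term exactly, I would instead refine the very first step by a case split on whether the backbone classifies the clean input correctly: on $\{y h_b(\boldsymbol{x}) \le 0\}$ I would bound the adversarial-error indicator by $\mathbf{1}\{y h_b(\boldsymbol{x}) \le 0\}$, whose expectation is exactly $\mathcal{R}_{\mathtt{std}}(h_{\mathtt{bck}})$ by Eq.~\ref{eq:theoretical:pre:std-error}, while on $\{y h_b(\boldsymbol{x}) > 0\}$ I would bound the event $\{\exists \boldsymbol{x}': y h_{\mathtt{hyb}}(\boldsymbol{x}') \le 0\}$ by the exponential term, again via Lemma~\ref{lemma:adv-error-bound:adv-boundary} applied to $h_b$ together with the elementary estimate $e^{-y h_{\mathtt{hyb}}(\boldsymbol{x}')} \ge 1$ at misclassified adversarial points. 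The delicate point is controlling the residual factor $e^{-y h_b(\boldsymbol{x})}$ that this case split leaves behind on the correctly-classified region and checking that the ``$-1$'' term absorbs it, so that the two regions recombine into exactly $\mathcal{R}_{\mathtt{std}}(h_{\mathtt{bck}})$ plus the stated boundary term; verifying this recombination is the crux of the argument.
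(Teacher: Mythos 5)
Your plan follows essentially the same route as the paper's proof: the paper writes $\mathcal{R}_{\mathtt{adv}}(h_{\mathtt{hyb}}) - \mathcal{R}_{\mathtt{std}}(h_{\mathtt{bck}}) \leq \mathbb{E}\left[ \max_{\boldsymbol{x}'} e^{-y h_{\mathtt{hyb}}(\boldsymbol{x}')} \right] - \mathbb{E}\left[ e^{-y h_{\mathtt{bck}}(\boldsymbol{x})} \right]$, merges the two expectations under one maximum, and then invokes Lemma~\ref{lemma:adv-error-bound:adv-boundary} together with the factorization $e^{-y h_{\mathtt{hyb}}(\boldsymbol{x}')} = e^{-y h_{b}(\boldsymbol{x}')} e^{-y h_{d}(\boldsymbol{x}')}$ to reach the stated form. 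The difficulty you isolate at the end is precisely the step the paper elides: Eq.~\ref{eq:theoretical:srd-error-bound:ineq-std-error} only gives $\mathcal{R}_{\mathtt{std}}(h_{\mathtt{bck}}) \leq \mathbb{E}\left[ e^{-y h_{b}(\boldsymbol{x})} \right]$, so replacing the subtracted term $-\mathcal{R}_{\mathtt{std}}(h_{\mathtt{bck}})$ by $-\mathbb{E}\left[ e^{-y h_{b}(\boldsymbol{x})} \right]$ subtracts a larger quantity and is not a valid upper bound on the difference; the paper performs this substitution without comment. Your proposed repair by a case split on $\{y h_{b}(\boldsymbol{x}) \leq 0\}$ versus $\{y h_{b}(\boldsymbol{x}) > 0\}$ is the right instinct and mirrors what the paper does (correctly) in the proof of Theorem~\ref{theorem:std-error-bound:main}, so on this point you are more careful than the paper's own argument.

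Two caveats on your intermediate steps. First, applying Lemma~\ref{lemma:adv-error-bound:adv-boundary} ``at the level of integrands'' does not go through: the pointwise inequality $e^{-y h_{b}(\boldsymbol{x}')} \leq e^{-y h_{b}(\boldsymbol{x})} e^{-h_{b}(\boldsymbol{x}) h_{b}(\boldsymbol{x}')}$ is false in general (take $y h_{b}(\boldsymbol{x}) = 1$ and $y h_{b}(\boldsymbol{x}') = 2$, giving $e^{-2} \not\leq e^{-3}$). The lemma asserts only an inequality between expectations of maxima, and even its own proof passes from a sum of two expectations to a single product term in a way that requires the correct-classification event to control one of the factors. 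Second, the recombination you flag as the crux really is unresolved: on the correctly classified region the bracketed quantity $e^{-y h_{b}(\boldsymbol{x})}\left( e^{-h_{b}(\boldsymbol{x}) h_{b}(\boldsymbol{x}')} e^{-y h_{d}(\boldsymbol{x}')} - 1 \right)$ need not dominate the adversarial misclassification indicator pointwise when $y h_{b}(\boldsymbol{x})$ is large (the prefactor $e^{-y h_{b}(\boldsymbol{x})}$ can be arbitrarily small), and the paper supplies no argument for this either. In short: you take the same approach as the paper, and the gap you honestly acknowledge is a gap that the paper's own proof also contains rather than one you introduced.
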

        By minimizing $e^{-y h_{b}(\boldsymbol{x})}$ in
        Theorem \ref{theorem:adv-error-bound:main}, we expect the backbone network to have a satisfactory accuracy on the natural data, which is a prerequisite of the proposed algorithm. As the backbone network has been fixed in this paper, the term $e^{-h_{b}(\boldsymbol{x})h_{b}(\boldsymbol{x}')}$ will not be influenced by the algorithm.
        \highlight{blue}{The remaining term $e^{-y h_{d}(\boldsymbol{x}')}$ implies that
        even if the backbone network makes wrong prediction on the adversarial example $\boldsymbol{x}'$, there is still a chance for the dilation network $h_{d}$ to correct the mis-classification and improve the overall adversarial accuracy of the hybrid network $h_{\mathtt{hyb}}$.
        This capability of dilation reflects our \textbf{remark (1)}.
        In another case, if $h_{b}$ makes a correct prediction, $h_{d}$ should agree with it, which reflects our \textbf{remark (2)} is also applied to the adversarial error.}

\section{Experiments}

    We perform extensive experiments to demonstrate that NADAR can improve the adversarial robustness of neural networks by dilating the neural architecture.
    In this section, we first compare both the standard and adversarial accuracy of our hybrid network to various state-of-the-art (SOTA) methods.
    Then, we perform experiments to analyze the impact of each component in the NADAR framework, including the dilation-based training approach and the standard performance constraint.
    Finally, we explore the sufficient scale of dilation and the effect of FLOPs constraint.
    More results on other datasets under various attacking manners with different backbones are also available in the supplementary material.
    
    \subsection{Experiment Setting}

        We use a similar pipeline to previous NAS works \cite{liu2018darts, xu2019pc, dong2020adversarially, guo2020meets}. Firstly, we optimize the dilating architecture in a one-shot model. Then, a discrete architecture is derived according to the architecture parameters $\boldsymbol{\alpha}$ and $\boldsymbol{\beta}$. Finally, a discrete network is constructed and retrained for validation.
        During the dilation phase, the training set is split into two equal parts. One is used as the training set for network weights optimization, and the other one is used as the validation set for architecture parameter optimization. During the retraining and validation phases, the entire training set is used for training, and the trained network is validated on the original validation set.
        
        We perform dilation under white-box attacks on CIAFR-10/100 \cite{krizhevsky2009learning} and ImageNet \cite{russakovsky2015imagenet} and under black-box attacks on CIFATR-10.
        The NADAR framework requires a backbone to be dilated. Following previous works \cite{madry2017towards, shafahi2019adversarial, zhang2020attacks,zhang2019theoretically}, we use the 10 times wider variant of ResNet, i.e. the Wide ResNet 32-10 (WRN32-10) \cite{zagoruyko2016wide}, on both CIFAR dataset, and use ResNet-50 \cite{he2016deep} on ImageNet.
        The search space of dilated architecture and the dilated architectures are as illustrated in Section A of the supplementary material.
        
        Considering both the optimization of neural architecture and the generation of adversarial examples is computational intensive, we apply methods to reduce the computational overhead during the dilation phase.
        As aforementioned, we utilize partial channel connections to reduce the cost of architecture optimization.
        As for the adversarial training during search, we use FreeAT \cite{shafahi2019adversarial}, which recycles gradients during training for the generation of adversarial examples and reduces the training cost.

    \subsection{Defense Against White-box Attacks}
    \label{sec:exp:rob-arch-dil-cifar10}
        
        \begin{table}[!tbp]
        \vspace{-1em}
            \caption{The standard validation accuracy on natural images and adversarial validation accuracy under various attacks of NADAR comparing to different SOTA methods on CIFAR-10.}
            \label{tab:exp:rob-arch-dil-cifar10:sota}
            \vspace{-1em}
            \begin{center}
            \scalebox{0.725}{
            \begin{tabular}{c|l|c|c|c|c||c|c|c|c|c}
                \toprule
                    \multirow{2}{*}{Category}
                        & \multirow{2}{*}{Method}
                        & \multicolumn{2}{c|}{Params (M)}
                        & \multicolumn{2}{c||}{$+\times$ (G)}
                        & \multicolumn{5}{c}{Valid Acc. Against (\%)} \\
                \cline{3-11}
                    && Back. & Arch. & Back. & Arch. & Natural  & FGSM & PGD-20 & PGD-100 & MI-FGSM \rule{0pt}{2.5ex}\\
                \midrule
                    Standard & Standard & 46.2 & - & 6.7 & - & 95.01 & 0.00 & 0.00 & 0.00 & 0.00 \\
                \midrule
                    \multirow{5}{*}{\begin{tabular}{@{}c@{}}Adversarial \\Training\end{tabular}}
                    & PGD-7 \cite{madry2017towards}          & 46.2 & - & 6.7 & - & 87.25 & 56.10 & 45.84 & 45.29 & -     \\
                    & FAT   \cite{zhang2020attacks}          & 46.2 & - & 6.7 & - & 89.34 & 65.52 & 46.13 & 46.82 & -     \\
                    & FreeAT-8 \cite{shafahi2019adversarial} & 46.2 & - & 6.7 & - & 85.96 & -     & 46.82 & 46.19 & -     \\
                    & TRADES-1 \cite{zhang2019theoretically} & 46.2 & - & 6.7 & - & 88.64 & -     & 48.90 & -     & 51.26 \\
                    & TRADES-6 \cite{zhang2019theoretically} & 46.2 & - & 6.7 & - & 84.92 & -     & 56.43 & -     & 57.95 \\
                \midrule
                    \multirow{4}{*}{\begin{tabular}{@{}c@{}}Standard \\NAS\end{tabular}}
                    & AmoebaNet \cite{real2019regularized} & - & 3.2 & - & 0.5 & 83.41 & 56.40 & 39.47 & - & 47.60 \\
                    & NASNet \cite{zoph2018learning}       & - & 3.8 & - & 0.6 & 83.66 & 55.67 & 48.02 & - & 53.05 \\
                    & DARTS      \cite{liu2018darts}       & - & 3.3 & - & 0.5 & 83.75 & 55.75 & 44.91 & - & 51.63 \\
                    & PC-DARTS       \cite{xu2019pc}       & - & 3.6 & - & 0.6 & 83.94 & 52.67 & 41.92 & - & 49.09 \\
                \midrule
                    \multirow{4}{*}{\begin{tabular}{@{}c@{}}Robust \\NAS\end{tabular}}
                    & RobNet-small  \cite{guo2020meets} & - & 4.4 & - & N/A & 78.05 & 53.93 & 48.32 & 48.08 & 48.98 \\
                    & RobNet-medium \cite{guo2020meets} & - & 5.7 & - & N/A & 78.33 & 54.55 & 49.13 & 48.96 & 49.34 \\
                    & RobNet-large  \cite{guo2020meets} & - & 6.9 & - & N/A & 78.57 & 54.98 & 49.44 & 49.24 & 49.92 \\
                    & RACL \cite{dong2020adversarially} & - & 3.6 & - & 0.5 & 83.89 & 57.44 & 49.34 & -     & 54.73 \\
                \midrule
                    \multirow{2}{*}{Dilation}
                    & NADAR-A (ours) & 46.2 & 3.6 & 6.7 & 0.6 & 86.61 & 59.98 & 52.84 & 52.54 & 57.72 \\
                    & NADAR-B (ours) & 46.2 & 4.4 & 6.7 & 0.7 & 86.23 & 60.46 & 53.43 & 53.06 & 58.43 \\
                \bottomrule
            \end{tabular}
            }
            \end{center}
        \vspace{-3em}
        \end{table}
        
        \paragraph{CIFAR-10.}
        We compare the hybrid network with 4 categories of SOTA methods, including standard training, adversarial training, standard NAS, and robust NAS.
        The standard training method and all the adversarial training methods use the WRN32-10.
        For the standard NAS methods, the architecture is the best architecture searched with standard training as reported in their papers and is retrained with PGD-7 \cite{madry2017towards}.
        For the adversarial NAS methods, we follow their original setting.
        We include two best dilation architectures obtained with our method, NADAR-A and NADAR-B dilated without and with the FLOPs constraint, respectively. The architectures are visualized in Section A of the supplementary material.
        Our architectures are also retrained with PGD-7.
        In Table \ref{tab:exp:rob-arch-dil-cifar10:sota}, the standard accuracy on natural images and the adversarial accuracy under PGD-20 attack are reported.

        Comparing NADAR-A and NADAR-B, the FLOPs constraint can obviously reduce the FLOPs number (reducing by 14.28\%) as well as the parameters number (reducing by 18.19\%) of the dilation architecture. The negative impact of it on the adversarial accuracy is marginal (only 0.59\% under PGD-20 attack), and the standard accuracy can even be slightly improved.

        As for adversarial training methods, we improve the adversarial performance by 7.59\% with only 1.02\% standard performance drop comparing to PGD-7 while we use the same training method but the hybrid network. This result illustrates that our dilation architecture can indeed improve the robustness of a network without modifying the training method. At the meantime, the standard accuracy is constrained to a competitive level.
        Comparing to FreeAT-8, which is their best adversarial setting, our method reaches both lower standard accuracy drop and higher adversarial accuracy gain than it.

        A very different method to PGD and FreeAT, namely friendly adversarial training (FAT), aims to improve the standard accuracy of adversarially trained models by generating weaker adversarial examples than regular adversarial training.
        Although FAT can make significant improvement on standard accuracy with weak attack, its adversarial accuracy gain against PGD-7 is marginal (only 0.29\%). It even has lower adversarial accuracy than FreeAT-8 despite its higher standard accuracy.
        Unlike FAT, our method is dedicated to another direction of improvement, which improves the adversarial robustness without significantly affecting the standard performance.
        Even though there is still a trade-off between the standard and the adversarial accuracy, we can increase the ratio of the standard drop to the adversarial gain to $1:7.44$.

        A previous work focuses on the trade-off is TRADES, which introduces a tuning parameter ($\lambda$) to adjust the balance of the trade-off. Nevertheless, comparing the TRADES-1 ($1/\lambda=1$) and TRADES-6 ($1/\lambda=6$), their trade-off ratio is only $1:2.02$ (i.e. $3.72\%$ standard accuracy drop for $7.53\%$ adversarial accuracy gain). We can provide a better ratio of trade-off than them. Besides, our standard performance is naturally constrained by Eq. \ref{eq:method:std-const:std-const-loss}. There are no hyperparameters in it that needs to be adjusted, which leads to our better trade-off ratio of the standard drop to the adversarial gain and more reasonable balance than TRADES.
        
        Finally, comparing to NAS methods, our hybrid network can outperform both the standard and robust NAS architectures.
        The standard NAS architectures are not optimized for adversarial robustness. Except the NASNet, their adversarial accuracies are generally poor. Although they are optimized for standard tasks, their standard accuracy after adversarial training is significantly lower than the WRN32-10 trained with PGD-7.
        As for robust NAS methods, RobNet significantly sacrifices their standard accuracy for robustness, which has the lowest standard accuracy among all the works listed in Table \ref{tab:exp:rob-arch-dil-cifar10:sota}.
        RACL has a better trade-off, but it can only reach the standard accuracy of standard NAS architecture, which is still lower than adversarilly trained WRN32-10.
        This demonstrates that dilating a standard backbone for both standard constraint and adversarial gain is more effectiveness than design a new architecture from scratch.
        
        \begin{table}[!tbp]
        \vspace{-1em}
            \centering
            \begin{minipage}[t]{0.32\linewidth}
                \centering
                \caption{The standard and adversarial validation on CIFAR-100.}
                \label{tab:exp:rob-arch-dil-cifar100:sota}
                \scalebox{0.66}{
                \begin{tabular}{l||c|c}
                    \toprule
                        \multirow{3}{*}{Method} & \multicolumn{2}{c}{Valid Acc. } \\
                        & \multicolumn{2}{c}{Against (\%)} \\
                    \cline{2-3}
                        & Natural & PGD-20 \rule{0pt}{2.5ex}\\
                    \midrule
                        Standard & 78.84 & 0.00 \\
                    \midrule
                        PGD-7 \cite{madry2017towards}          & -     & 23.20 \\
                        FreeAT-8 \cite{shafahi2019adversarial} & 62.13 & 25.88 \\
                        RobNet-large  \cite{guo2020meets} & - & 23.19 \\
                        RACL \cite{dong2020adversarially} & - & 27.80 \\
                    \midrule
                        NADAR-A (ours) & 61.73 & 27.77 \\
                        NADAR-B (ours) & 62.56 & 28.40 \\
                    \bottomrule
                \end{tabular}
                }
            \end{minipage}
            \hfill
            \begin{minipage}[t]{0.65\linewidth}
                \centering
                \caption{The standard and adversarial validation accuracy on Tiny-ImageNet with ResNet-50 as backbone.}
                \label{tab:exp:imagenet-exp}
                \scalebox{0.77}{
                \begin{tabular}{l|l|c||c|c|c|c}
                    \toprule
                        \multirow{2}{*}{Architecture} &
                        \multirow{2}{*}{\begin{tabular}{@{}l@{}}Training\\Method\end{tabular}} &
                        \multirow{2}{*}{\begin{tabular}{@{}c@{}}GPU\\Days\end{tabular}} &
                        \multicolumn{4}{c}{Valid Acc. Against (\%)}\\
                    \cline{4-7}
                        &&& Natural & FGSM & PGD-10 & PGD-20 \rule{0pt}{2.5ex}\\
                    \midrule
                        Backbone         & PGD-4    & 0.19 & 43.23 & 24.13 & 22.25 & 22.16 \\
                        Dilation (ours)  & PGD-4    & 0.45 & 44.37 & 24.33 & 22.69 & 22.70 \\
                    \midrule
                        Backbone         & FreeAT-4 & 0.05 & 42.73 & 24.10 & 22.67 & 22.58 \\
                        Dilation (ours)  & FreeAT-4 & 0.12 & 44.68 & 24.66 & 22.88 & 22.78 \\
                    \midrule
                        Backbone         & FastAT   & 0.12 & 45.92 & 23.53 & 20.66 & 20.54 \\
                        Dilation (ours)  & FastAT   & 0.22 & 46.22 & 23.90 & 21.21 & 21.14 \\
                    \bottomrule
                \end{tabular}
                }
            \end{minipage}
        \vspace{-2em}
        \end{table}
        
        \paragraph{CIFAR-100.}
        We adapt architectures dilated on CIFAR-10 to CIFAR-100, and report the results in Table \ref{tab:exp:rob-arch-dil-cifar100:sota}.
        We consider two kinds of baselines, including traditional adversarial training methods (PGD-7 and FreeAT-8) and two robust NAS methods (RobNet and RACL).
        The results show that even with more categories, NADAR can still reach superior robustness under PGD-20 attack.
        \highlight{blue}{
        As for the standard validation accuracy, we can reach competitive performance comparing to FreeAT-8. The other works do not report standard accuracy in their papers.
        As for the adversarial validation accuracy, our NADAR-B can outperform all the baselines, while NADAR-A is slightly lower than RACL but significantly better than the others.
        }
        
        \paragraph{Tiny-ImageNet.}
        We also adapt our architectures to a larger dataset, namely Tiny-ImageNet.
        For efficient training on Tiny-ImageNet, we compare our dilated architecture with PGD and two efficient adversarial training method, i.e. FreeAT \cite{shafahi2019adversarial} and FastAT \cite{wong2020fast}.
        \highlight{blue}{We follow the ImageNet setting of \citet{shafahi2019adversarial} and \citet{wong2020fast}, which uses ResNet-50 as the backbone and set the clip size $\epsilon=4$. For PGD and FreeAT, we set the number of steps $K=4$ and the step size $\epsilon_S=2$}
        The results are reported in Table \ref{tab:exp:imagenet-exp}.
        We also report the GPU days cost to train the networks with NVIDIA V100 GPU.
        \highlight{blue}{Although NADAR consumes approximately 1.8$\sim$2.4$\times$ GPU days, our method can consistently outperform the baselines in terms of both natural and adversarial accuracy.}
        
    \subsection{Defense Against AutoAttack}
        
        \begin{wraptable}{r}{0.5\linewidth}
            \vspace{-1.6em}
            \caption{The adversarial validation accuracy of NADAR comparing to different SOTA methods under AutoAttack on CIFAR-10.}
            \label{tab:exp:rob-arch-dil-cifar10:auto}
            \begin{center}
            \scalebox{0.58}{
            \begin{tabular}{c|l||c|c|c|c}
                \toprule
                    \multirow{2}{*}{Category}
                    & \multirow{2}{*}{Method}
                    & \multicolumn{4}{c}{Valid Acc. Against (\%)} \\
                \cline{3-6}
                    && APGD$_{\mathrm{CE}}$ & APGD$_{\mathrm{DLR}}^{\mathrm{T}}$ & FAB$^{\mathrm{T}}$ & Square \rule{0pt}{2.5ex}\\
                \midrule
                    \multirow{4}{*}{\begin{tabular}{@{}c@{}}Adversarial \\Training\end{tabular}}
                    & PGD-7 \cite{madry2017towards}          & 44.75 & 44.28 & 44.75 & 53.10 \\
                    & FastAT \cite{wong2020fast}             & 45.90 & 43.22 & 43.74 & 53.32 \\
                    & FreeAT-8 \cite{shafahi2019adversarial} & 43.66 & 41.64 & 43.44 & 51.95 \\
                    & TRADES-6 \cite{zhang2019theoretically} & 55.28 & 43.10 & 43.45 & 59.43 \\
                \midrule
                    \multirow{2}{*}{Dilation}
                    & NADAR-A (ours) & 52.27 & 50.00 & 50.00 & 58.69 \\
                    & NADAR-B (ours) & 52.64 & 50.45 & 50.88 & 59.33 \\
                \bottomrule
            \end{tabular}
            }
            \vspace{-1.2em}
            \end{center}
        \end{wraptable}
        We consider a novel and promising evaluation method, namely AutoAttack \cite{croce2020reliable}.
        We use the standard setting of AutoAttack, including four different attacks: APGD$_{\mathrm{CE}}$, APGD$_{\mathrm{DLR}}^{\mathrm{T}}$, FAB$^{\mathrm{T}}$ and Square.
        The results are reported in Table \ref{tab:exp:rob-arch-dil-cifar10:auto}.
        Our method reaches superior performance under APGD$_{\mathrm{DLR}}^{\mathrm{T}}$ and FAB$^{\mathrm{T}}$ attacks.
        Under APGD$_{\mathrm{CE}}$ attack, our best result can outperform PGD-7, FastAT and FreeAT-8 (around 6.74$\sim$8.98\%), but is slightly lower than TRADES-6 (about 2.64\%).
        Under Square attack, we still outperform PGD-7, FastAT and FreeAT-8, and can reach competitive performance to TRADES-6.
        
    \subsection{Defense Against Black-box Attacks}
        
        \begin{wraptable}{r}{0.6\linewidth}
            \vspace{-1.6em}
            \centering
            \caption{The adversarial validation accuracy under black-box attacks on CIFAR-10.}
            \label{tab:exp:black-box-attacks}
            \vspace{0.3em}
            \scalebox{0.6}{
            \begin{tabular}{l|l||c|c|c|c}
                \toprule
                    \multirow{2}{*}{Defense Network} & 
                    \multirow{2}{*}{Source Network} &
                    \multicolumn{4}{c}{Valid Acc. (\%)}\\
                \cline{3-6}
                    & & FGSM & PGD-20 & PGD-100 & MI-FGSM \rule{0pt}{2.5ex}\\
                \midrule
                    WRN32-10 + PGD-7 & WRN32-10 + Natural & 83.99 & 84.56 & 84.76 & 84.05 \\
                    NADAR-B  + PGD-7 & WRN32-10 + Natural & 85.94 & 86.59 & 86.51 & 85.95 \\
                \midrule
                    WRN32-10 + PGD-7 & WRN32-10 + FGSM    & 70.78 & 68.26 & 68.30 & 69.73 \\
                    NADAR-B  + PGD-7 & WRN32-10 + FGSM    & 77.25 & 77.66 & 77.69 & 77.19 \\
                \midrule
                    WRN32-10 + PGD-7 & NADAR-B  + PGD-7   & 69.33 & 67.08 & 67.11 & 68.26 \\
                    NADAR-B  + PGD-7 & WRN32-10 + PGD-7   & 70.78 & 68.26 & 68.30 & 69.73 \\
                \bottomrule
            \end{tabular}
            }
            \vspace{-1em}
        \end{wraptable}
        We perform black-box attacks on CIFAR-10.
        We use different source networks to generate adversarial examples.
        For the source networks, we use the WRN32-10 backbone trained with natural images and adversarial images generated with FGSM and PGD-7.
        For the defense networks, we compare our best NADAR-B architecture with the plain WRN32-10 backbone. Both of them are trained with PGD-7.
        \highlight{blue}{The results are reported in Table \ref{tab:exp:black-box-attacks} grouped according to source networks.
        With the WRN32-10 source network trained with natural images and FGSM, NADAR-B can consistently outperform the backbone.
        We also use NADAR-B trained with PGD-7 and WRN32-10 trained with PGD-7 to attack each other.}
        \highlight{red}{Our hybrid network can consistently reach superior performances.}
        
    \subsection{Ablation Study of Dilation Method}
    
        \begin{wraptable}{r}{0.4\linewidth}
            \vspace{-1.8em}
            \centering
            \caption{The standard and adversarial accuracy by retraining of various networks dilated with ablated manners.}
            \label{tab:exp:searching-method}
            \vspace{0.3em}
            \scalebox{0.66}{
            \begin{tabular}{c|c||c|c}
                \toprule
                    \multirow{2}{*}{\begin{tabular}{@{}c@{}}Separate \\Objectives\end{tabular}}
                    & \multirow{2}{*}{\begin{tabular}{@{}c@{}}Standard \\Constraint\end{tabular}}
                    & \multicolumn{2}{c}{Valid Acc. Against (\%)} \\
                \cline{3-4}
                    && Natural & PGD-20 \rule{0pt}{2.5ex}\\
                \midrule
                    No  & N/A & 84.19$\pm$0.32 & 45.97$\pm$0.18 \\
                    Yes & No  & 84.79$\pm$0.55 & 48.53$\pm$0.33 \\
                    Yes & Yes & 85.97$\pm$0.26 & 53.18$\pm$0.25 \\
                \bottomrule
            \end{tabular}
            }
            \vspace{-1em}
        \end{wraptable}
        We perform ablation study of the dilation method. There are two crucial components in our method.
        Firstly, the separate optimization objectives of standard and adversarial tasks can ensure the that the backbone focus on clear images, and the dilation network learns to improve the robustness of the backbone.
        Secondly, the standard performance constraint prevents the dilation network from harming the standard performance of the backbone network.
        This experiment demonstrates that both of them make crucial contributions to the final results.
        Note that without the separate objectives, the hybrid network is trained as a whole. Therefore, there is also no standard constraint.
        We use the same settings to Section \ref{sec:exp:rob-arch-dil-cifar10}.
        
        The standard and adversarial accuracy of the obtained networks by retraining is reported in Table \ref{tab:exp:searching-method}.
        If there is no standard performance constraint, dilating together or separately has the similar standard performance. Although the backbone of the latter is trained with standard objective, it won't influence the retraining results too much (only $0.6\%$ higher in average). The complete framework with standard constraint reaches the best standard accuracy after retraining.
        As for the adversarial accuracy, dilating with the separate objectives can consistently outperforms dilating with a single adversarial objectives.

\section{Conclusion}
    
    The trade-off between accuracy and robustness is considered as an inherent property of neural networks, which cannot be easily bypassed with adversarial training or robust NAS.
    In this paper, we propose to dilate the architecture of neural networks to increase the adversarial robustness while maintaining a competitive standard accuracy with a straightforward constraint. The framework is called neural architecture dilation for adversarial robustness (NADAR).
    Extensive experiments demonstrate that NADAR can effectively improve the robustness of neural networks and can reach a better trade-off ratio than existing methods.

\section*{References}


{
\small

\bibliography{references}

\begin{thebibliography}{33}
\providecommand{\natexlab}[1]{#1}
\providecommand{\url}[1]{\texttt{#1}}
\expandafter\ifx\csname urlstyle\endcsname\relax
  \providecommand{\doi}[1]{doi: #1}\else
  \providecommand{\doi}{doi: \begingroup \urlstyle{rm}\Url}\fi

\bibitem[Arbelaez et~al.(2010)Arbelaez, Maire, Fowlkes, and
  Malik]{arbelaez2010contour}
P.~Arbelaez, M.~Maire, C.~Fowlkes, and J.~Malik.
\newblock Contour detection and hierarchical image segmentation.
\newblock \emph{IEEE transactions on pattern analysis and machine
  intelligence}, 33\penalty0 (5):\penalty0 898--916, 2010.

\bibitem[Baker et~al.(2016)Baker, Gupta, Naik, and Raskar]{baker2016designing}
B.~Baker, O.~Gupta, N.~Naik, and R.~Raskar.
\newblock Designing neural network architectures using reinforcement learning.
\newblock \emph{arXiv preprint arXiv:1611.02167}, 2016.

\bibitem[Cai et~al.(2018)Cai, Du, Liu, and Song]{cai2018curriculum}
Q.-Z. Cai, M.~Du, C.~Liu, and D.~Song.
\newblock Curriculum adversarial training.
\newblock \emph{arXiv preprint arXiv:1805.04807}, 2018.

\bibitem[Cisse et~al.(2017)Cisse, Bojanowski, Grave, Dauphin, and
  Usunier]{cisse2017parseval}
M.~Cisse, P.~Bojanowski, E.~Grave, Y.~Dauphin, and N.~Usunier.
\newblock Parseval networks: Improving robustness to adversarial examples.
\newblock In \emph{International Conference on Machine Learning}, pages
  854--863. PMLR, 2017.

\bibitem[Croce and Hein(2020)]{croce2020reliable}
F.~Croce and M.~Hein.
\newblock Reliable evaluation of adversarial robustness with an ensemble of
  diverse parameter-free attacks.
\newblock In \emph{International Conference on Machine Learning}, pages
  2206--2216. PMLR, 2020.

\bibitem[Dong et~al.(2020)Dong, Li, Wang, and Xu]{dong2020adversarially}
M.~Dong, Y.~Li, Y.~Wang, and C.~Xu.
\newblock Adversarially robust neural architectures.
\newblock \emph{arXiv preprint arXiv:2009.00902}, 2020.

\bibitem[Farnia et~al.(2018)Farnia, Zhang, and Tse]{farnia2018generalizable}
F.~Farnia, J.~M. Zhang, and D.~Tse.
\newblock Generalizable adversarial training via spectral normalization.
\newblock \emph{arXiv preprint arXiv:1811.07457}, 2018.

\bibitem[Ganin et~al.(2016)Ganin, Ustinova, Ajakan, Germain, Larochelle,
  Laviolette, Marchand, and Lempitsky]{ganin2016domain}
Y.~Ganin, E.~Ustinova, H.~Ajakan, P.~Germain, H.~Larochelle, F.~Laviolette,
  M.~Marchand, and V.~Lempitsky.
\newblock Domain-adversarial training of neural networks.
\newblock \emph{The Journal of Machine Learning Research}, 17\penalty0
  (1):\penalty0 2096--2030, 2016.

\bibitem[Goodfellow et~al.(2014)Goodfellow, Shlens, and
  Szegedy]{goodfellow2014explaining}
I.~J. Goodfellow, J.~Shlens, and C.~Szegedy.
\newblock Explaining and harnessing adversarial examples.
\newblock \emph{arXiv preprint arXiv:1412.6572}, 2014.

\bibitem[Guo et~al.(2020)Guo, Yang, Xu, Liu, and Lin]{guo2020meets}
M.~Guo, Y.~Yang, R.~Xu, Z.~Liu, and D.~Lin.
\newblock When nas meets robustness: In search of robust architectures against
  adversarial attacks.
\newblock In \emph{Proceedings of the IEEE/CVF Conference on Computer Vision
  and Pattern Recognition}, pages 631--640, 2020.

\bibitem[He et~al.(2016)He, Zhang, Ren, and Sun]{he2016deep}
K.~He, X.~Zhang, S.~Ren, and J.~Sun.
\newblock Deep residual learning for image recognition.
\newblock In \emph{Proceedings of the IEEE conference on computer vision and
  pattern recognition}, pages 770--778, 2016.

\bibitem[Krizhevsky et~al.(2009)Krizhevsky, Hinton,
  et~al.]{krizhevsky2009learning}
A.~Krizhevsky, G.~Hinton, et~al.
\newblock Learning multiple layers of features from tiny images.
\newblock Technical report, Citeseer, 2009.

\bibitem[Krizhevsky et~al.(2012)Krizhevsky, Sutskever, and
  Hinton]{krizhevsky2012imagenet}
A.~Krizhevsky, I.~Sutskever, and G.~E. Hinton.
\newblock Imagenet classification with deep convolutional neural networks.
\newblock \emph{Advances in neural information processing systems},
  25:\penalty0 1097--1105, 2012.

\bibitem[LeCun et~al.(1989)LeCun, Boser, Denker, Henderson, Howard, Hubbard,
  and Jackel]{lecun1989backpropagation}
Y.~LeCun, B.~Boser, J.~S. Denker, D.~Henderson, R.~E. Howard, W.~Hubbard, and
  L.~D. Jackel.
\newblock Backpropagation applied to handwritten zip code recognition.
\newblock \emph{Neural computation}, 1\penalty0 (4):\penalty0 541--551, 1989.

\bibitem[LeCun et~al.(1995)LeCun, Jackel, Bottou, Cortes, Denker, Drucker,
  Guyon, Muller, Sackinger, Simard, et~al.]{lecun1995learning}
Y.~LeCun, L.~Jackel, L.~Bottou, C.~Cortes, J.~S. Denker, H.~Drucker, I.~Guyon,
  U.~A. Muller, E.~Sackinger, P.~Simard, et~al.
\newblock Learning algorithms for classification: A comparison on handwritten
  digit recognition.
\newblock \emph{Neural networks: the statistical mechanics perspective},
  261:\penalty0 276, 1995.

\bibitem[Liu et~al.(2017)Liu, Simonyan, Vinyals, Fernando, and
  Kavukcuoglu]{liu2017hierarchical}
H.~Liu, K.~Simonyan, O.~Vinyals, C.~Fernando, and K.~Kavukcuoglu.
\newblock Hierarchical representations for efficient architecture search.
\newblock \emph{arXiv preprint arXiv:1711.00436}, 2017.

\bibitem[Liu et~al.(2018)Liu, Simonyan, and Yang]{liu2018darts}
H.~Liu, K.~Simonyan, and Y.~Yang.
\newblock Darts: Differentiable architecture search.
\newblock \emph{arXiv preprint arXiv:1806.09055}, 2018.

\bibitem[Madry et~al.(2017)Madry, Makelov, Schmidt, Tsipras, and
  Vladu]{madry2017towards}
A.~Madry, A.~Makelov, L.~Schmidt, D.~Tsipras, and A.~Vladu.
\newblock Towards deep learning models resistant to adversarial attacks.
\newblock \emph{arXiv preprint arXiv:1706.06083}, 2017.

\bibitem[Real et~al.(2017)Real, Moore, Selle, Saxena, Suematsu, Tan, Le, and
  Kurakin]{real2017large}
E.~Real, S.~Moore, A.~Selle, S.~Saxena, Y.~L. Suematsu, J.~Tan, Q.~V. Le, and
  A.~Kurakin.
\newblock Large-scale evolution of image classifiers.
\newblock In \emph{Proceedings of the 34th International Conference on Machine
  Learning-Volume 70}, pages 2902--2911. JMLR. org, 2017.

\bibitem[Real et~al.(2019)Real, Aggarwal, Huang, and Le]{real2019regularized}
E.~Real, A.~Aggarwal, Y.~Huang, and Q.~V. Le.
\newblock Regularized evolution for image classifier architecture search.
\newblock In \emph{Proceedings of the aaai conference on artificial
  intelligence}, volume~33, pages 4780--4789, 2019.

\bibitem[Russakovsky et~al.(2015)Russakovsky, Deng, Su, Krause, Satheesh, Ma,
  Huang, Karpathy, Khosla, Bernstein, et~al.]{russakovsky2015imagenet}
O.~Russakovsky, J.~Deng, H.~Su, J.~Krause, S.~Satheesh, S.~Ma, Z.~Huang,
  A.~Karpathy, A.~Khosla, M.~Bernstein, et~al.
\newblock Imagenet large scale visual recognition challenge.
\newblock \emph{International journal of computer vision}, 115\penalty0
  (3):\penalty0 211--252, 2015.

\bibitem[Shafahi et~al.(2019)Shafahi, Najibi, Ghiasi, Xu, Dickerson, Studer,
  Davis, Taylor, and Goldstein]{shafahi2019adversarial}
A.~Shafahi, M.~Najibi, M.~A. Ghiasi, Z.~Xu, J.~Dickerson, C.~Studer, L.~S.
  Davis, G.~Taylor, and T.~Goldstein.
\newblock Adversarial training for free!
\newblock In \emph{Advances in Neural Information Processing Systems}, pages
  3358--3369, 2019.

\bibitem[Szegedy et~al.(2015)Szegedy, Liu, Jia, Sermanet, Reed, Anguelov,
  Erhan, Vanhoucke, and Rabinovich]{szegedy2015going}
C.~Szegedy, W.~Liu, Y.~Jia, P.~Sermanet, S.~Reed, D.~Anguelov, D.~Erhan,
  V.~Vanhoucke, and A.~Rabinovich.
\newblock Going deeper with convolutions.
\newblock In \emph{Proceedings of the IEEE conference on computer vision and
  pattern recognition}, pages 1--9, 2015.

\bibitem[Tram{\`e}r et~al.(2017)Tram{\`e}r, Kurakin, Papernot, Goodfellow,
  Boneh, and McDaniel]{tramer2017ensemble}
F.~Tram{\`e}r, A.~Kurakin, N.~Papernot, I.~Goodfellow, D.~Boneh, and
  P.~McDaniel.
\newblock Ensemble adversarial training: Attacks and defenses.
\newblock \emph{arXiv preprint arXiv:1705.07204}, 2017.

\bibitem[Tsipras et~al.(2018)Tsipras, Santurkar, Engstrom, Turner, and
  Madry]{tsipras2018robustness}
D.~Tsipras, S.~Santurkar, L.~Engstrom, A.~Turner, and A.~Madry.
\newblock Robustness may be at odds with accuracy.
\newblock \emph{arXiv preprint arXiv:1805.12152}, 2018.

\bibitem[Weng et~al.(2018)Weng, Zhang, Chen, Yi, Su, Gao, Hsieh, and
  Daniel]{weng2018evaluating}
T.-W. Weng, H.~Zhang, P.-Y. Chen, J.~Yi, D.~Su, Y.~Gao, C.-J. Hsieh, and
  L.~Daniel.
\newblock Evaluating the robustness of neural networks: An extreme value theory
  approach.
\newblock \emph{arXiv preprint arXiv:1801.10578}, 2018.

\bibitem[Wong and Kolter(2018)]{wong2018provable}
E.~Wong and Z.~Kolter.
\newblock Provable defenses against adversarial examples via the convex outer
  adversarial polytope.
\newblock In \emph{International Conference on Machine Learning}, pages
  5286--5295. PMLR, 2018.

\bibitem[Wong et~al.(2020)Wong, Rice, and Kolter]{wong2020fast}
E.~Wong, L.~Rice, and J.~Z. Kolter.
\newblock Fast is better than free: Revisiting adversarial training.
\newblock \emph{arXiv preprint arXiv:2001.03994}, 2020.

\bibitem[Xu et~al.(2019)Xu, Xie, Zhang, Chen, Qi, Tian, and Xiong]{xu2019pc}
Y.~Xu, L.~Xie, X.~Zhang, X.~Chen, G.-J. Qi, Q.~Tian, and H.~Xiong.
\newblock Pc-darts: Partial channel connections for memory-efficient
  differentiable architecture search.
\newblock \emph{arXiv preprint arXiv:1907.05737}, 2019.

\bibitem[Zagoruyko and Komodakis(2016)]{zagoruyko2016wide}
S.~Zagoruyko and N.~Komodakis.
\newblock Wide residual networks.
\newblock \emph{arXiv preprint arXiv:1605.07146}, 2016.

\bibitem[Zhang et~al.(2019)Zhang, Yu, Jiao, Xing, Ghaoui, and
  Jordan]{zhang2019theoretically}
H.~Zhang, Y.~Yu, J.~Jiao, E.~P. Xing, L.~E. Ghaoui, and M.~I. Jordan.
\newblock Theoretically principled trade-off between robustness and accuracy.
\newblock \emph{arXiv preprint arXiv:1901.08573}, 2019.

\bibitem[Zhang et~al.(2020)Zhang, Xu, Han, Niu, Cui, Sugiyama, and
  Kankanhalli]{zhang2020attacks}
J.~Zhang, X.~Xu, B.~Han, G.~Niu, L.~Cui, M.~Sugiyama, and M.~Kankanhalli.
\newblock Attacks which do not kill training make adversarial learning
  stronger.
\newblock In \emph{International Conference on Machine Learning}, pages
  11278--11287. PMLR, 2020.

\bibitem[Zoph et~al.(2018)Zoph, Vasudevan, Shlens, and Le]{zoph2018learning}
B.~Zoph, V.~Vasudevan, J.~Shlens, and Q.~V. Le.
\newblock Learning transferable architectures for scalable image recognition.
\newblock In \emph{Proceedings of the IEEE conference on computer vision and
  pattern recognition}, pages 8697--8710, 2018.

\end{thebibliography}

}

\newpage

\section*{Appendix}

\appendix

\section{Search Space and Dilated Architectures}
\label{appendix:sec:architecture}
    
    For the dilation architecture, we use a DAG with 4 nodes as the supernetwork. There are 8 operation candidates for each edges, including 4 convolutional operations: $3 \times 3$ separable convolutions, $5 \times 5$ separable convolutions, $3 \times 3$ dilated separable convolutions and $5 \times 5$ dilated separable convolutions, 2 pooling operations: $3 \times 3$ average pooling and $3 \times 3$ max pooling, and two special operations: an $\operatorname{identity}$ operation representing skip-connection and a $\operatorname{zero}$ operation representing two nodes are not connected. During dilating, we stack 3 cells for each of the 3 blocks in the WRN32-10. During retraining, the number is increased to 6.
    
    \begin{figure}[!htbp]
        \centering
        \subfigure[NADAR-A (with FLOPs constraint);]{
            \includegraphics[width=0.48\linewidth]{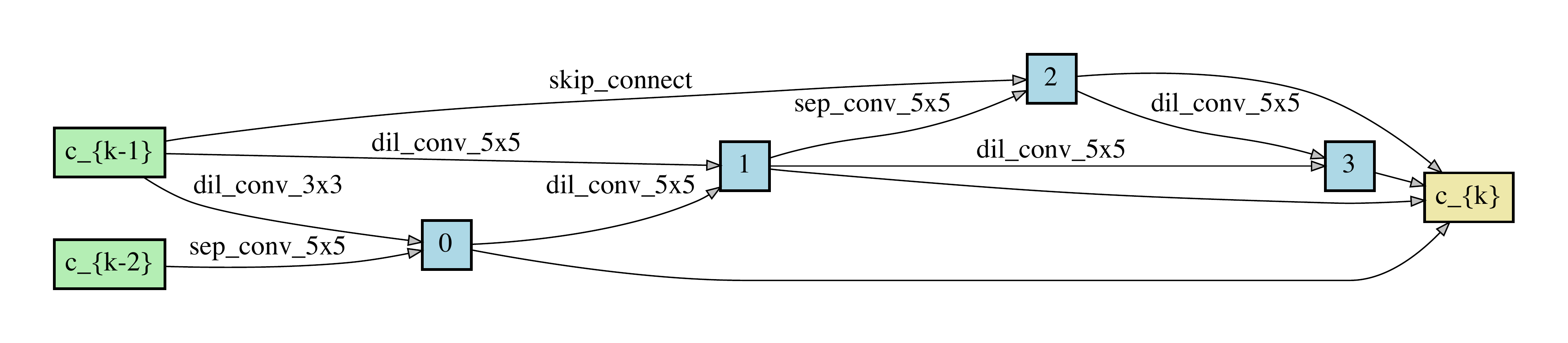}
            \label{fig:exp:optimum-cifar10:arch-a}
        }
        \subfigure[NADAR-B (without FLOPS constraint).]{
            \includegraphics[width=0.48\linewidth]{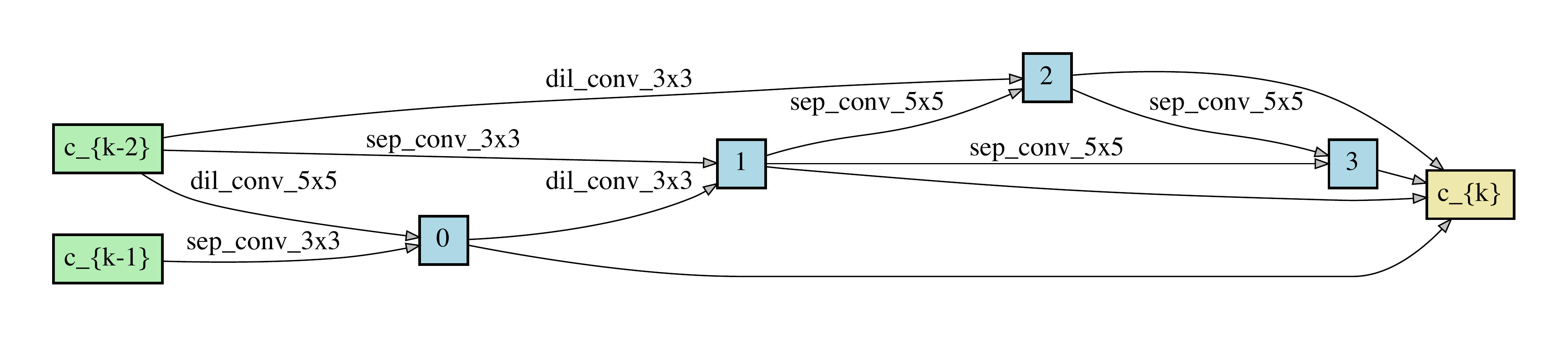}
            \label{fig:exp:optimum-cifar10:arch-b}
        }
        \caption{Visualization of the dilated cells.}
        \label{fig:exp:optimum-cifar10}
    \end{figure}
    
    \highlight{blue}{The dilated architectures designed by NADAR are as shown in Figure \ref{fig:exp:optimum-cifar10}. We find that NADAR prefers deep architecture, which can increase more non-linearity with limited number of parameters. The non-linearity is closely related to network capacity. Such deep architectures can bring more capacity and adversarial robustness to the hybrid network.}

\section{Additional Results}

    \subsection{MNIST}
    
        Despite adaptation, we report the adversarial validation accuracy of architectures dilated by NADAR under various attack methods on MNIST and a colorful variant of MNIST, namely MNIST-M \cite{ganin2016domain}.
        MNIST-M blends greyscale images in MNIST over random patches of colour photos in BSDS500 \citep{arbelaez2010contour}. The blending introduces extra colour and texture.
        In this experiment, we use the ResNet-18 as our backbone.
    
        \begin{table}[!htbp]
            \centering
            \caption{The adversarial validation accuracy of NADAR under the FGSM, MI-FGSM, and PGD-40 attack on MNIST and MNIST-M.} \vspace{0.1in}
            \begin{tabular}{l|c|c|c|c|c}
                \toprule
                    \multirow{2}{*}{Dataset}
                    & \multirow{2}{*}{\begin{tabular}{@{}c@{}}FLOPs \\Const.\end{tabular}}
                    & \multirow{2}{*}{\begin{tabular}{@{}c@{}}$+\times$ \\(M)\end{tabular}}
                    & \multicolumn{3}{c}{Valid Acc. Against (\%)} \\
                \cline{4-6}
                    &&& FGSM & MI-FGSM & PGD-40 \rule{0pt}{2.5ex}\\
                \midrule
                    \multirow{2}{*}{MNIST}
                    & T & 104.02 & 98.19 & 98.11 & 98.90 \\
                    & F & 131.26 & 98.27 & 98.25 & 98.97 \\
                \midrule
                    \multirow{2}{*}{MNIST-M}
                    & T & 89.23  & 92.50 & 92.31 & 91.79 \\
                    & F & 138.81 & 93.47 & 93.04 & 92.62 \\
                \bottomrule
            \end{tabular}
            \label{tab:supp:exp:digits:digits}
        \end{table}
        
        Table \ref{tab:supp:exp:digits:digits} shows the adversarial validation accuracy.
        We report the results of NADAR with and without FLOPs constraint.
        As shown in the table, the FLOPs constraint can reduce the FLOPs by $20.75 \sim 35.72 \%$, while the performance is still competitive.
        On the MNIST, we observe that NADAR can reach better accuracy under PGD-40 than under FGSM and MI-FGSM. We argue that this is because the MNIST dataset is relatively simple, and the 40 steps PGD causes overfitting. We therefore perform experiments on the MNIST-M, and the results shows that $\text{FGSM} > \text{MI-FGSM} > \text{PGD-40}$.
        We also compare the results to SOTA methods. On the MNIST dataset, PGD-7 only reaches 96.01\% validation accuracy under PGD-40 attack, and TRADES-6 only reaches 96.07\%.
    
    \subsection{Dilation with Various Backbones}
    
    \highlight{red}{
        To demonstrate the generalizability of NADAR, we test it with different scale of ResNet backbones from ResNet-18 to ResNet-101. The standard accuracy and adversarial accuracy under PGD-20 attack of various backbones are reported in Table \ref{table:supp:various-backbones}. All the hybrid networks are retrained with PGD-7 as the same setting in our paper.
        
        \begin{table}[!htbp]
            \centering
            \caption{NADAR with various backbones.}
            \begin{tabular}{l|c|c}
                \toprule
                    \multirow{2}{*}{Network}
                    & \multicolumn{2}{c}{Valid Acc. Against (\%)} \\
                \cline{2-3}
                    & Natural & PGD-20 \rule{0pt}{2.5ex}\\
                \midrule
                    WRN32-10 w/o dilation & 87.25 & 45.84 \\
                \midrule
                    ResNet-18 + NADAR  & 81.35 & 50.92 \\
                    ResNet-34 + NADAR  & 83.57 & 52.64 \\
                    ResNet-50 + NADAR  & 83.23 & 52.89 \\
                    ResNet-101 + NADAR & 84.39 & \textbf{53.89} \\
                \midrule
                    WRN32-10 + NADAR   & \textbf{86.23} & 53.43 \\
                \bottomrule
            \end{tabular}
            \label{table:supp:various-backbones}
        \end{table}
        
        The results demonstrate that NADAR can effectively improve the robustness of different backbones comparing to the WRN32-10 baseline without any dilation. The largest ResNet-101 backbone can reach competitive adversarial accuracy to the dilated WRN32-10.
        Regarding standard accuracy on natural images, all the ResNet backbones suffer higher performance drop comparing to the WRN32-10 backbone due to the inherent limitation of the small capacity of the backbone itself. This illustrates that although NADAR can improve the robustness regardless of the capacity of the backbone, it is still crucial to select a proper backbone for better standard accuracy.
    }

\section{Additional Ablation Studies}

    \subsection{Adversarial Training for Dilation}
        
        As aforementioned, we use the FressAT as the adversarial training method to optimize the dilation architecture for efficiency.
        FreeAT requires a repeat number $K$ on each mini-batch for better perturbation generation. According to their paper, $K=8$ reaches the best robustness.
        We also perform experiments regarding the selection of $K$. Fig. \ref{fig:exp:search-curve} illustrates the accuracy curves of the hybrid network during dilating.
        We report the adversarial training accuracy of FreeAT, the standard validation accuracy, and the adversarial validation accuracy under PGD-20 attacks.
        There is no standard training accuracy, because the hybrid network is not directly optimized under the standard classification task (recall the standard performance constraint).
        All the values are obtained after each complete epochs, when the $K$-repeat of all the mini-batches are finished. The horizontal axis represents the total number of optimization steps, which equals to the epoch number multiply by $K$.
    
        When $K=4$, the hybrid network reaches outstanding adversarial training accuracy, but the validation only increases slightly at the very beginning of training, and then keep decreasing until reach 0. 
        In the contrast, the standard validation accuracy increases continuously and reaches a competitive level.
        This implies that the perturbation generated with $K=4$ is not powerful enough to dilate the network for the defense against PGD-20, and the framework might be dominated by the standard training of the backbone or the standard constraint on the dilation architecture.
        When $K=8$, although the standard validation accuracy is much lower than the previous result, the adversarial training and validation accuracy is competitive and close to each other.
    
    \begin{figure}[!tbp]
        \centering
        \begin{minipage}[t]{0.48\linewidth}
            \centering
            \includegraphics[width=0.9\linewidth]{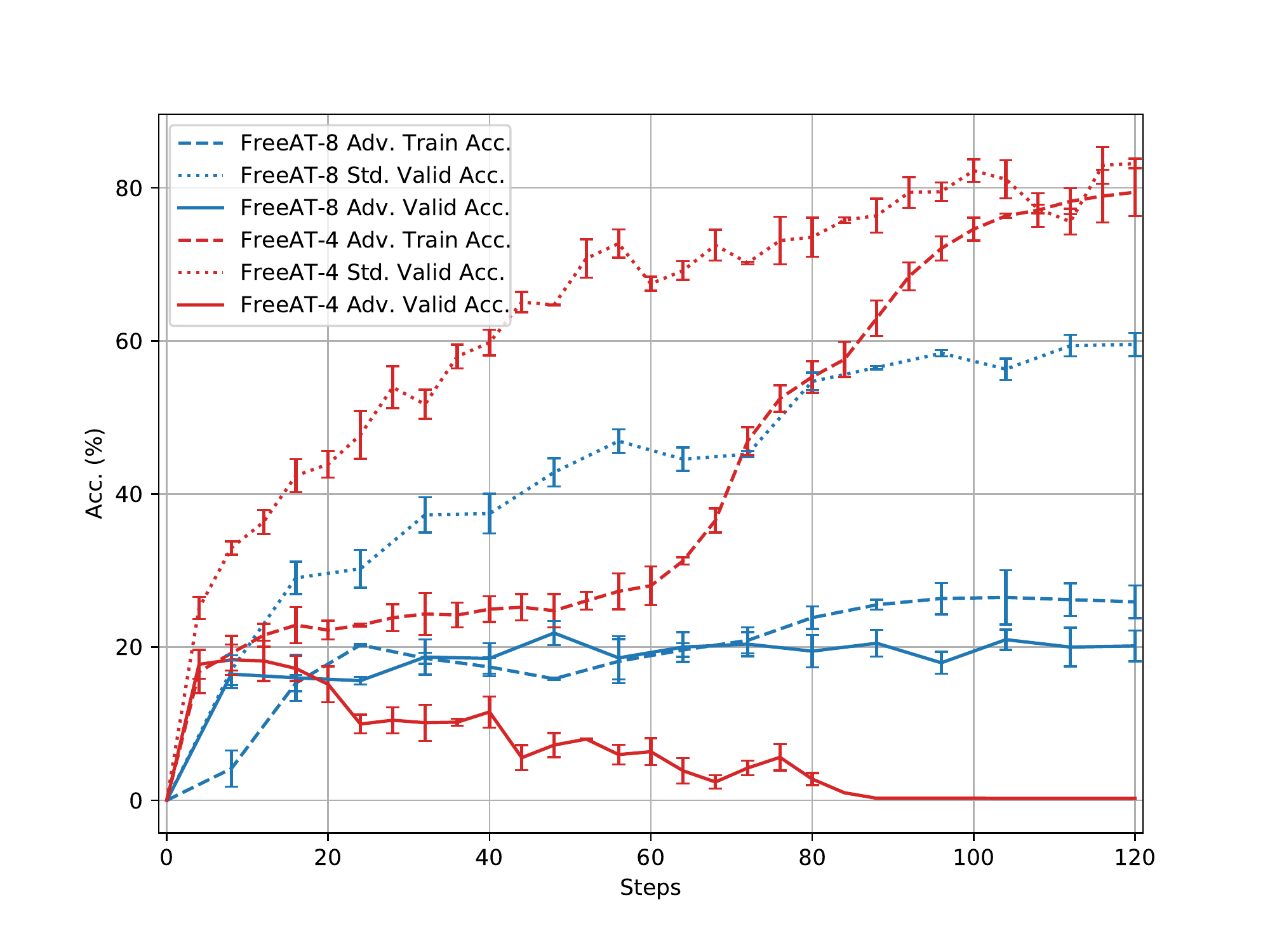}
            \caption{The accuracy curves of dilating architectures with different adversarial training settings of FreeAT.}
            \label{fig:exp:search-curve}
        \end{minipage}
        \hfill
        \begin{minipage}[t]{0.48\linewidth}
            \centering
            \includegraphics[width=0.98\linewidth]{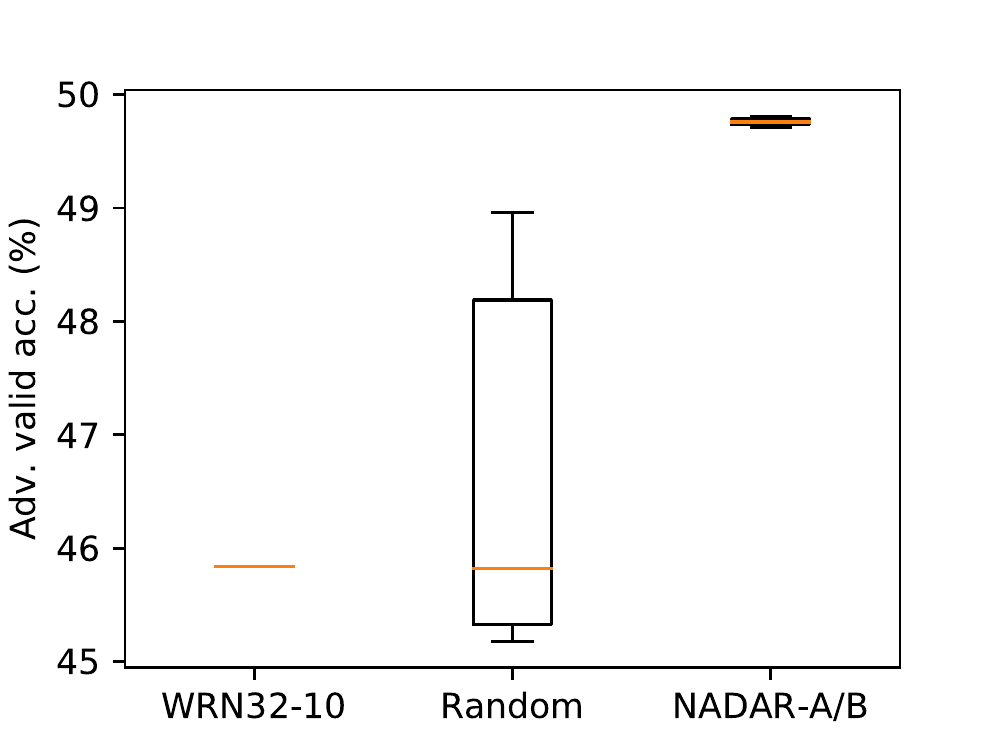}
            \caption{Comparison of NADAR to WRN32-10 backbone and randomly dilated hybrid networks.}
            \label{fig:exp:random-dilation}
        \end{minipage}
    \end{figure}
    
    \subsection{Different Scales of Dilation}
    
        Beside the FLOPs constraint, there is another factor that impacts the model capacity and the computation cost of a dilation network, that is the number of stacked cells.
        In Section 5.2, we stack 6 cells for each of the 3 blocks in the WRN32-10 for retraining.
        Intuitively, large network capacity corresponds to better performance. However, we demonstrate that the network cannot be dilate unlimitedly. There is a sweet spot of neural architecture dilation.
        In this experiment, we test two more scales of stacked cells.
        Table \ref{tab:exp:boosting-scale} compares the validation results of different scales of dilation.
    
        \begin{table}[!tbp]
            \centering
            \caption{Different number of stacked cells in the dilation network.}
            \label{tab:exp:boosting-scale} \vspace{0.1in}
            \begin{tabular}{c||c|c|c|c||c|c}
                \toprule 
                    \multirow{3}{*}{\begin{tabular}{@{}c@{}}\# Dilation \\Cells\end{tabular}}
                        & \multicolumn{2}{c|}{Params }
                        & \multicolumn{2}{c||}{$+\times$ }
                        & \multicolumn{2}{c}{Valid Acc. } \\
                    & \multicolumn{2}{c|}{(M)}
                        & \multicolumn{2}{c||}{(G)}
                        & \multicolumn{2}{c}{Against (\%)} \\
                \cline{2-7}
                    & Back. & Arch. & Back. & Arch. & Natural & PGD-20 \rule{0pt}{2.5ex}\\
                \midrule 
                    $3 \times 3$ & 46.2 & 2.0 & 6.7 & 0.3 & 86.45$\pm$0.22 & 47.78$\pm$0.41 \\
                    $3 \times 6$ & 46.2 & 4.4 & 6.7 & 0.7 & 86.28$\pm$0.26 & 49.63$\pm$0.18 \\
                    $3 \times 9$ & 46.2 & 6.8 & 6.7 & 1.1 & 85.63$\pm$0.12 & 45.25$\pm$0.57 \\
                \bottomrule 
            \end{tabular}
        \end{table}
    
        We can observe that as the scale of dilation network increases, the standard accuracy consistently declines.
        In terms of the adversarial accuracy, it first increases with the dilation scale, and then drops significantly.
        This might because the network becomes difficult to converge as the network capacity increases.
        Therefore, we stack $3 \times 6$ cells in the dilation network, which reaches the best adversarial accuracy and has a lower standard accuracy drop.
        
    \highlight{blue}{
    \subsection{Comparison to Random Dilation}
    }
        
        \highlight{blue}{
        To demonstrate the effectiveness of neural architecture dilation, we compare five randomly dilated architectures to our NADAR architectures and the WRN32-10 backbone.
        We train all the networks with PGD-7 and test their robustness under PGD-20.
        The adversarial validation accuracy is as shown in Figure \ref{fig:exp:random-dilation}.
        The median accuracy of random architectures is similar to the WRN32-10 backbone, but with a great possibility to reach better performance. However, there is still a certain possibility that the dilation architectures can slightly harm the performance of the hybrid network.
        This shows that neural architecture dilation has the potential to improve the robustness of a backbone, but it still needs to be optimized.
        The accuracy of NADAR-A and -B is significantly better than the best results of random dilation, which shows that our approach can indeed improve the robustness of backbones effectively and stably.
        }

\section{Proof of Theorems}

    This section proves the lemmas and theorems in our paper.

    \subsection{Standard Error Bound}
        
        \begin{theorem} \label{theorem:supp:std-error-bound:main}
            Let $h_{\mathtt{bck}}(\boldsymbol{x}) = h_{b}(\boldsymbol{x})$ be a standard hypothesis, $h_{\mathtt{hyb}}(\boldsymbol{x}) = h_{b}(\boldsymbol{x}) + h_{d}(\boldsymbol{x})$ be a hybrid hypothesis, and $\mathcal{R}_{\mathtt{std}}(h_{\mathtt{bck}})$ and $\mathcal{R}_{\mathtt{std}}(h_{\mathtt{hyb}})$ be the standard error of $h_{\mathtt{bck}}$ and $h_{\mathtt{hyb}}$, respectively. For any mapping $h_{b}, h_{d}: \mathcal{X} \mapsto \mathbb{R}$, we have
            \begin{equation}
                \mathcal{R}_{\mathtt{std}}(h_{\mathtt{hyb}}) \leq \mathcal{R}_{\mathtt{std}}(h_{\mathtt{bck}}) + \mathbb{E} \left[ e^{- h_{b}(\boldsymbol{x}) h_{d}(\boldsymbol{x})} \right],
            \end{equation}
            where $\boldsymbol{x} \in \mathcal{X}$ is the input.
        \end{theorem}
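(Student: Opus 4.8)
The plan is to reduce the theorem to a single pointwise inequality between indicator functions and the exponential surrogate, and then integrate against the data distribution $\mathcal{D}$. First I would rewrite both errors using the binary-label simplification already set up in the excerpt, namely that $\operatorname{sign}(h(\boldsymbol{x})) \neq y$ is equivalent to $y h(\boldsymbol{x}) \leq 0$. This gives $\mathcal{R}_{\mathtt{std}}(h_{\mathtt{hyb}}) = \mathbb{E}\left[ \mathbf{1}\{ y (h_{b}(\boldsymbol{x}) + h_{d}(\boldsymbol{x})) \leq 0 \} \right]$ and $\mathcal{R}_{\mathtt{std}}(h_{\mathtt{bck}}) = \mathbb{E}\left[ \mathbf{1}\{ y h_{b}(\boldsymbol{x}) \leq 0 \} \right]$, so that the whole statement becomes a comparison of expectations of these indicators plus the term $\mathbb{E}[e^{-h_{b}(\boldsymbol{x}) h_{d}(\boldsymbol{x})}]$.

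Next I would decompose the hybrid-error event into the case where the backbone is \emph{also} wrong and the case where the backbone is correct but the hybrid is wrong. Since the event $\{ y h_{\mathtt{hyb}} \leq 0 \}$ is the disjoint union of its intersections with $\{ y h_{b} \leq 0 \}$ and $\{ y h_{b} > 0 \}$, and the first intersection is contained in $\{ y h_{b} \leq 0 \}$, we obtain the pointwise bound
\[
\mathbf{1}\{ y h_{\mathtt{hyb}}(\boldsymbol{x}) \leq 0 \} \;\leq\; \mathbf{1}\{ y h_{b}(\boldsymbol{x}) \leq 0 \} \;+\; \mathbf{1}\{ y h_{b}(\boldsymbol{x}) > 0,\; y h_{\mathtt{hyb}}(\boldsymbol{x}) \leq 0 \}.
\]
Taking expectation, the first term reproduces $\mathcal{R}_{\mathtt{std}}(h_{\mathtt{bck}})$, so it remains to dominate the residual indicator by $e^{-h_{b} h_{d}}$.

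The crux of the argument is the residual case. On the event $\{ y h_{b} > 0,\; y(h_{b}+h_{d}) \leq 0 \}$, the two conditions force $y h_{d} \leq -y h_{b} < 0$, hence $y h_{b} > 0$ and $y h_{d} < 0$; since $y^{2} = 1$ we have $h_{b} h_{d} = (y h_{b})(y h_{d}) < 0$, and therefore $e^{-h_{b}(\boldsymbol{x}) h_{d}(\boldsymbol{x})} > 1$. As the residual indicator takes values in $\{0,1\}$, it is bounded above by $e^{-h_{b} h_{d}}$ on this event, while off the event it is $0 \leq e^{-h_{b} h_{d}}$ trivially. Thus $\mathbf{1}\{ y h_{b} > 0,\; y h_{\mathtt{hyb}} \leq 0 \} \leq e^{-h_{b}(\boldsymbol{x}) h_{d}(\boldsymbol{x})}$ pointwise, and taking expectations and adding $\mathcal{R}_{\mathtt{std}}(h_{\mathtt{bck}})$ yields the claim.

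I expect the main obstacle to be precisely this sign-and-product bookkeeping in the residual case, together with the realization that one should \emph{not} apply the surrogate inequality from Eq.~\eqref{eq:theoretical:srd-error-bound:ineq-std-error} directly to $h_{\mathtt{hyb}}$. Doing so would give $\mathcal{R}_{\mathtt{std}}(h_{\mathtt{hyb}}) \leq \mathbb{E}[e^{-y h_{b}} e^{-y h_{d}}]$, which is too loose to close the bound against $\mathcal{R}_{\mathtt{std}}(h_{\mathtt{bck}}) + \mathbb{E}[e^{-h_{b} h_{d}}]$ (the pointwise inequality $e^{-y h_{b}} e^{-y h_{d}} \leq \mathbf{1}\{ y h_{b} \leq 0 \} + e^{-h_{b} h_{d}}$ fails, e.g.\ when $y h_{b} = y h_{d} = 3$). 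The exponential must be introduced only on the disagreement event through the indicator decomposition, keeping the sharp indicator for the backbone error; this is the key design choice that makes the estimate go through.
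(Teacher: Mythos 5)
Your proof is correct and follows essentially the same route as the paper's: both decompose the hybrid error into the backbone error plus the event that the backbone is correct while the hybrid disagrees, and then dominate that residual by $\mathbb{E}\left[e^{-h_b(\boldsymbol{x})h_d(\boldsymbol{x})}\right]$. The only difference is the last step: you observe directly that $h_b h_d<0$ on the residual event so the indicator is at most $e^{-h_b h_d}$, whereas the paper applies the surrogate $\mathbf{1}\{h_{\mathtt{hyb}}h_{\mathtt{bck}}\leq 0\}\leq e^{-h_{\mathtt{hyb}}h_{\mathtt{bck}}}=e^{-h_b^2}e^{-h_b h_d}$ and drops the factor $e^{-h_b^2}\leq 1$ --- both are valid and yield the same bound.
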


        \begin{proof}
            In Theorem \ref{theorem:supp:std-error-bound:main}, we compare the standard error $\mathcal{R}_{\mathtt{std}}$ of $h_{\mathtt{bck}}$and $h_{\mathtt{hyb}}$. The error bound can be defined as the disagreement between the two hypothesis under the condition that $h_{\mathtt{bck}}$ is correct.
            Formally, it can be written as
            \begin{align}
                \mathcal{R}_{\mathtt{std}}(h&_{\mathtt{adv}}) - \mathcal{R}_{\mathtt{std}}(h_{\mathtt{bck}}) \\
                =& \mathbb{E} \left[ \mathbf{1} \left( y h_{\mathtt{bck}}(\boldsymbol{x}) > 0, \; h_{\mathtt{hyb}}(\boldsymbol{x}) h_{\mathtt{bck}}(\boldsymbol{x}) \leq 0 \right) \right] \\
                \leq& \mathbb{E} \left[ \mathbf{1} \left( h_{\mathtt{hyb}}(\boldsymbol{x}) h_{\mathtt{bck}}(\boldsymbol{x}) \leq 0 \right) \right].
            \end{align}
            By applying a simple inequality
            \begin{equation} \label{eq:supp:ineq-exp}
                \mathbf{1}\{y h(\boldsymbol{x}) \leq 0\} \leq e^{- y h(\boldsymbol{x})},
            \end{equation}
            we have:
            \begin{align}
                & \mathbb{E} \left[ \mathbf{1} \left( h_{\mathtt{hyb}}(\boldsymbol{x}) h_{\mathtt{bck}}(\boldsymbol{x}) \leq 0 \right) \right] \\
                \leq& \mathbb{E} \left[ e^{- h_{\mathtt{hyb}}(\boldsymbol{x}) h_{\mathtt{bck}}(\boldsymbol{x})} \right] \\
                =& \mathbb{E} \left[ e^{- (h_{\mathtt{b}}(\boldsymbol{x}) + h_{\mathtt{w}}(\boldsymbol{x})) h_{\mathtt{b}}(\boldsymbol{x})} \right] \\
                =& \mathbb{E} \left[ e^{- h_{\mathtt{b}}(\boldsymbol{x}) h_{\mathtt{b}}(\boldsymbol{x})} e^{- h_{\mathtt{b}}(\boldsymbol{x}) h_{\mathtt{w}}(\boldsymbol{x})} \right].
            \end{align}
            As $h_{\mathtt{b}}(\boldsymbol{x}) h_{\mathtt{b}}(\boldsymbol{x}) \in [0, +\infty)$, we have $e^{- h_{\mathtt{b}}(\boldsymbol{x}) h_{\mathtt{b}}(\boldsymbol{x})} \in (0, 1]$. Therefore, we have
            \begin{equation}
                \mathcal{R}_{\mathtt{std}}(h_{\mathtt{hyb}}) - \mathcal{R}_{\mathtt{std}}(h_{\mathtt{bck}})
                \leq \mathbb{E} \left[ e^{- h_{\mathtt{b}}(\boldsymbol{x}) h_{\mathtt{w}}(\boldsymbol{x})} \right].
            \end{equation}
            Theorem \ref{theorem:supp:std-error-bound:main} is proved.
        \end{proof}
    
    \subsection{Adversarial Error Bound}

        We first prove Lemma \ref{lemma:supp:adv-error-bound:adv-boundary} which is used to prove Theorem \ref{theorem:supp:adv-error-bound:main}.
        
        \begin{lemma} \label{lemma:supp:adv-error-bound:adv-boundary}
            For any mapping $h: \mathcal{X} \mapsto \mathbb{R}$, we have
            \begin{equation}
                \mathbb{E} \left[ \max_{\boldsymbol{x}' \in B_p(\boldsymbol{x}, \varepsilon)} e^{-y h(\boldsymbol{x}')} \right] \leq \mathbb{E} \left[ \max_{\boldsymbol{x}' \in B_p(\boldsymbol{x}, \varepsilon)} e^{-y h(\boldsymbol{x})} e^{-h(\boldsymbol{x}) h(\boldsymbol{x}')} \right],
            \end{equation}
            where $\boldsymbol{x} \in \mathcal{X}$ is the input, $y \in \{-1, +1\}$ is the corresponding label, and $\varepsilon$ is the bound of allowed adversarial perturbation.
        \end{lemma}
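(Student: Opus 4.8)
The plan is to peel off the outer expectation and reduce the claim to a pointwise statement, first in $(\boldsymbol{x}, y)$ and then in the perturbation $\boldsymbol{x}'$. By monotonicity of $\mathbb{E}[\cdot]$ it suffices to show that for every fixed pair $(\boldsymbol{x}, y)$ the inner maxima satisfy $\max_{\boldsymbol{x}' \in B_p(\boldsymbol{x}, \varepsilon)} e^{-y h(\boldsymbol{x}')} \le \max_{\boldsymbol{x}' \in B_p(\boldsymbol{x}, \varepsilon)} e^{-y h(\boldsymbol{x})} e^{-h(\boldsymbol{x}) h(\boldsymbol{x}')}$. Since both maxima range over the \emph{identical} ball $B_p(\boldsymbol{x}, \varepsilon)$ and the center value $h(\boldsymbol{x})$ is constant with respect to $\boldsymbol{x}'$, the clean route is to establish the integrand inequality $e^{-y h(\boldsymbol{x}')} \le e^{-y h(\boldsymbol{x})} e^{-h(\boldsymbol{x}) h(\boldsymbol{x}')}$ for each admissible $\boldsymbol{x}'$ and then take the maximum of both sides, invoking the elementary fact that $f \le g$ pointwise implies $\max f \le \max g$.

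The key algebraic step is to factor out the clean term. Writing $e^{-y h(\boldsymbol{x}')} = e^{-y h(\boldsymbol{x})}\, e^{\,y(h(\boldsymbol{x}) - h(\boldsymbol{x}'))}$ and cancelling the common positive factor $e^{-y h(\boldsymbol{x})}$, the target reduces to the boundary-factor comparison $e^{\,y(h(\boldsymbol{x}) - h(\boldsymbol{x}'))} \le e^{-h(\boldsymbol{x}) h(\boldsymbol{x}')}$, i.e.\ to the scalar exponent inequality $y\big(h(\boldsymbol{x}) - h(\boldsymbol{x}')\big) + h(\boldsymbol{x}) h(\boldsymbol{x}') \le 0$. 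This mirrors the manipulation used for the standard error bound, where the self-product factor $e^{-h_b(\boldsymbol{x})^2} \le 1$ was discarded using $h_b(\boldsymbol{x})^2 \ge 0$; the purpose here is to manufacture the backbone inner-product factor $e^{-h(\boldsymbol{x}) h(\boldsymbol{x}')}$ in place of the raw label term $e^{-y h(\boldsymbol{x}')}$, so that when the lemma is later applied with $h = h_b$ it supplies precisely the factor $e^{-h_b(\boldsymbol{x}) h_b(\boldsymbol{x}')}$ appearing in the adversarial error bound.

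I expect the scalar inequality $y(h(\boldsymbol{x}) - h(\boldsymbol{x}')) + h(\boldsymbol{x}) h(\boldsymbol{x}') \le 0$ to be the main obstacle, and it is the step demanding the most care: it is not valid for arbitrary real $h(\boldsymbol{x}), h(\boldsymbol{x}')$ and $y \in \{-1, +1\}$ — for instance it is violated when the clean point is confidently correct, say $y h(\boldsymbol{x}) = 1$, while a perturbation flips the sign of the score. The natural way to push it through is to restrict attention to the regime in which the lemma is actually exploited, namely clean examples the backbone classifies correctly and confidently, where $y$ and $\operatorname{sign}(h(\boldsymbol{x}))$ agree and $y h(\boldsymbol{x})$ is comparable to $h(\boldsymbol{x})^2$ — exactly the consistency condition underlying remark (2). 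I would therefore perform the factorisation first, isolate this scalar inequality, and spend the bulk of the argument justifying it on the correctly-classified region, treating misclassified clean points separately (there the bound is slack, since $e^{-y h(\boldsymbol{x})}$ is already large); recombining the two maxima and re-applying monotonicity of the expectation is then routine.
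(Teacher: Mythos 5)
Your reduction to the pointwise exponent inequality $y\big(h(\boldsymbol{x})-h(\boldsymbol{x}')\big)+h(\boldsymbol{x})h(\boldsymbol{x}')\le 0$ is where the argument breaks, and you have correctly diagnosed that it is false in general; the problem is that your proposed repair does not close the gap. The lemma is stated for \emph{any} mapping $h$, and the expectation runs over all of $\mathcal{D}$, so you are not free to restrict attention to correctly-classified clean points, and your claim that the bound is ``slack'' on misclassified ones is not true. Take $y=+1$, $h(\boldsymbol{x})=-0.1$, and suppose $h$ ranges over $[-10,-0.1]$ on the ball. Then $\max_{\boldsymbol{x}'}e^{-yh(\boldsymbol{x}')}=e^{10}$, while $e^{-yh(\boldsymbol{x})}\max_{\boldsymbol{x}'}e^{-h(\boldsymbol{x})h(\boldsymbol{x}')}=e^{0.1}\cdot e^{-0.01}$, so the pointwise inequality fails by several orders of magnitude; concentrating $\mathcal{D}$ near such a point shows that no separate treatment of the misclassified region will recover the claimed bound along your route.

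The paper's own proof takes a genuinely different route: it never compares the two exponentials pointwise. It first decomposes the $0$--$1$ adversarial error event into the clean error event plus a boundary event (namely $yh(\boldsymbol{x})>0$ but $h(\boldsymbol{x})h(\boldsymbol{x}')\le 0$ for some $\boldsymbol{x}'$ in the ball), applies the surrogate bound $\mathbf{1}\{u\le 0\}\le e^{-u}$ to each of the two terms separately, and then merges the resulting sum $\mathbb{E}\left[e^{-yh(\boldsymbol{x})}\right]+\mathbb{E}\left[\max_{\boldsymbol{x}'}e^{-h(\boldsymbol{x})h(\boldsymbol{x}')}\right]$ into the single expression $\mathbb{E}\left[\max_{\boldsymbol{x}'}e^{-yh(\boldsymbol{x})}e^{-h(\boldsymbol{x})h(\boldsymbol{x}')}\right]$. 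You should be aware that this final merging step is asserted as an equality but is not one (a sum of expectations is not an expectation of a product, and a product of exponentials need not dominate their sum), so the paper's argument is itself not airtight; still, the intended mechanism is the event decomposition followed by term-by-term surrogate bounds, not the pointwise exponent inequality you isolated, and your factor-and-cancel strategy cannot be completed as stated.
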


        \begin{proof}
            Lemma \ref{lemma:supp:adv-error-bound:adv-boundary} aims to describe the inherent feature of a hypothesis on adversarial tasks. It bounds the adversarial error of a hypothesis with its standard error and its disagreement between standard and adversarial examples. Formally, it can be written as
            \begin{equation}
            \resizebox{0.9\hsize}{!}{$\displaystyle
                \mathbb{E} \left[ \max_{\boldsymbol{x}' \in B_p(\boldsymbol{x}, \varepsilon)} \mathbf{1} \left( y h(\boldsymbol{x}') > 0 \right) \right] = \mathbb{E} \left[ \mathbf{1} \left( y h(\boldsymbol{x}) > 0 \right) \right]
                + \mathbb{E} \left[ \max_{\boldsymbol{x}' \in B_p(\boldsymbol{x}, \varepsilon)} \mathbf{1} \left( y h(\boldsymbol{x}) > 0, \; h(\boldsymbol{x}) h(\boldsymbol{x}') \leq 0 \right) \right]
            $}.
            \end{equation}
            By applying Eq. \ref{eq:supp:ineq-exp} again, we have
            \begin{align}
                \mathbb{E} & \left[ \max_{\boldsymbol{x}' \in B_p(\boldsymbol{x}, \varepsilon)} e^{-y h(\boldsymbol{x}')} \right]\\
                & \leq \mathbb{E} \left[ e^{-y h(\boldsymbol{x})} \right] + \mathbb{E} \left[ \max_{\boldsymbol{x}' \in B_p(\boldsymbol{x}, \varepsilon)} e^{-h(\boldsymbol{x}) h(\boldsymbol{x}')} \right]\\
                & = \mathbb{E} \left[ \max_{\boldsymbol{x}' \in B_p(\boldsymbol{x}, \varepsilon)} e^{-y h(\boldsymbol{x})} e^{-h(\boldsymbol{x}) h(\boldsymbol{x}')} \right].
            \end{align}
            Lemma \ref{lemma:supp:adv-error-bound:adv-boundary} is proved.
        \end{proof}

        \begin{theorem} \label{theorem:supp:adv-error-bound:main}
            Let $h_{\mathtt{bck}}(\boldsymbol{x}) = h_{b}(\boldsymbol{x})$ be a standard hypothesis, $h_{\mathtt{hyb}}(\boldsymbol{x}) = h_{b}(\boldsymbol{x}) + h_{d}(\boldsymbol{x})$ be a dilated hypothesis, $\mathcal{R}_{\mathtt{std}}(h_{\mathtt{bck}})$ be the standard error of $h_{\mathtt{bck}}$, and $\mathcal{R}_{\mathtt{adv}}(h_{\mathtt{hyb}})$ be the adversarial error of $h_{\mathtt{hyb}}$. For any mapping $h_{b}, h_{d}: \mathcal{X} \mapsto \mathbb{R}$, we have
            \begin{equation}
                \mathcal{R}_{\mathtt{adv}}(h_{\mathtt{hyb}}) \leq \mathcal{R}_{\mathtt{std}}(h_{\mathtt{bck}})
                + \mathbb{E} \left[ \max_{\boldsymbol{x}' B_p(\boldsymbol{x}, \varepsilon)} e^{-y h_{b}(\boldsymbol{x})} \left( e^{-h_{b}(\boldsymbol{x})h_{b}(\boldsymbol{x}')} e^{-y h_{d}(\boldsymbol{x}')} - 1 \right) \right].
            \end{equation}
            where $\boldsymbol{x} \in \mathcal{X}$ is the input, $y \in \{-1, +1\}$ is the corresponding label, and $\varepsilon$ is the bound of allowed adversarial perturbation.
        \end{theorem}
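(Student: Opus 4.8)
The plan is to generalize the proof of Lemma~\ref{lemma:supp:adv-error-bound:adv-boundary} from a single hypothesis to the dilated pair $(h_b,h_d)$, keeping the same TRADES-style split of the adversarial error into a natural-error term and a boundary term. First I would apply the adversarial exponential bound Eq.~\ref{eq:theoretical:srd-error-bound:ineq-adv-error} to $h_{\mathtt{hyb}}$ and factor the exponent through $h_{\mathtt{hyb}}=h_b+h_d$, so that $\mathcal{R}_{\mathtt{adv}}(h_{\mathtt{hyb}})\le\mathbb{E}[\max_{\boldsymbol{x}'}e^{-y h_b(\boldsymbol{x}')}e^{-y h_d(\boldsymbol{x}')}]$. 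This isolates the backbone factor $e^{-y h_b(\boldsymbol{x}')}$, which is precisely the object that the Lemma knows how to re-anchor at the clean point $\boldsymbol{x}$, while the dilation factor $e^{-y h_d(\boldsymbol{x}')}$ is carried along unchanged.

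Next I would decompose the hybrid adversarial error at the indicator level, conditioning on whether the backbone classifies the clean input correctly. Using $\mathbf{1}(\exists\boldsymbol{x}': y h_{\mathtt{hyb}}(\boldsymbol{x}')\le0)\le\mathbf{1}(y h_b(\boldsymbol{x})\le0)+\mathbf{1}(y h_b(\boldsymbol{x})>0,\ \exists\boldsymbol{x}': y h_{\mathtt{hyb}}(\boldsymbol{x}')\le0)$ and taking expectations yields $\mathcal{R}_{\mathtt{adv}}(h_{\mathtt{hyb}})\le\mathcal{R}_{\mathtt{std}}(h_{\mathtt{bck}})+\mathbb{E}[\max_{\boldsymbol{x}'}\mathbf{1}(y h_b(\boldsymbol{x})>0,\ y h_{\mathtt{hyb}}(\boldsymbol{x}')\le0)]$, where the first term is already the exact natural error of the backbone required by the statement, so no exponential relaxation of that term is needed.

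The crux is bounding the boundary term. On the event $\{y h_b(\boldsymbol{x})>0\}$ the label $y$ and the score $h_b(\boldsymbol{x})$ share a sign, so multiplying the misclassification condition by the positive quantity $h_b(\boldsymbol{x})/y$ shows that $y h_{\mathtt{hyb}}(\boldsymbol{x}')\le0$ is equivalent to the sign-disagreement $h_b(\boldsymbol{x})h_{\mathtt{hyb}}(\boldsymbol{x}')\le0$. I would then apply the elementary inequality Eq.~\ref{eq:supp:ineq-exp} with $z=h_b(\boldsymbol{x})h_{\mathtt{hyb}}(\boldsymbol{x}')$ and factor $h_{\mathtt{hyb}}=h_b+h_d$ to surface the backbone factorization $e^{-h_b(\boldsymbol{x})h_b(\boldsymbol{x}')}$ together with the dilation term. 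The prefactor $e^{-y h_b(\boldsymbol{x})}$ and the ``$-1$'' in the statement then arise from the same reformulation used at the end of the Lemma's proof: adding and subtracting the clean-correct contribution $\mathbb{E}[e^{-y h_b(\boldsymbol{x})}]$ repackages the crude exponential on the correctly-classified region as a correction, after which the $\boldsymbol{x}'$-independent factor $e^{-y h_b(\boldsymbol{x})}$ is pulled inside the $\max$ to land exactly on the bound of Theorem~\ref{theorem:supp:adv-error-bound:main}.

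I expect this boundary step to be the main obstacle, for two reasons. First, the sign-flip identity cleanly converts $y$ to $h_b(\boldsymbol{x})$ only inside the comparison $h_b(\boldsymbol{x})h_{\mathtt{hyb}}(\boldsymbol{x}')\le0$, so the factorization naturally produces $e^{-h_b(\boldsymbol{x})h_d(\boldsymbol{x}')}$; recovering the stated $e^{-y h_d(\boldsymbol{x}')}$ requires exploiting that on correctly-classified clean points the backbone score is well-aligned with $y$, and making this replacement precise is the delicate part. Second, the exponential surrogate is loosest exactly near the decision boundary $y h_{\mathtt{hyb}}(\boldsymbol{x}')\approx0$, so, just as in the Lemma, the subtracted clean term and the $\max$ over $\boldsymbol{x}'$ must be manipulated at the level of expectations rather than pointwise, which is where the bookkeeping producing the ``$-1$'' has to be handled with care.
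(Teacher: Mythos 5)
Your opening paragraph actually matches the paper's strategy, but the route you then commit to in your second and third paragraphs diverges from it in a way that cannot reach the stated bound. The paper does \emph{not} condition on backbone correctness at the indicator level. It writes $\mathcal{R}_{\mathtt{adv}}(h_{\mathtt{hyb}})-\mathcal{R}_{\mathtt{std}}(h_{\mathtt{bck}})$, replaces \emph{both} terms by their exponential surrogates so that the difference $\mathbb{E}\bigl[\max_{\boldsymbol{x}'} e^{-yh_{\mathtt{hyb}}(\boldsymbol{x}')}-e^{-yh_{\mathtt{bck}}(\boldsymbol{x})}\bigr]$ sits under a single expectation, applies Lemma~\ref{lemma:supp:adv-error-bound:adv-boundary} to the backbone factor of $e^{-yh_{\mathtt{hyb}}(\boldsymbol{x}')}=e^{-yh_b(\boldsymbol{x}')}e^{-yh_d(\boldsymbol{x}')}$ only, carrying $e^{-yh_d(\boldsymbol{x}')}$ along untouched (so the label $y$ in the dilation factor survives), and finally factors $e^{-yh_b(\boldsymbol{x})}$ out of the difference; the ``$-1$'' in the theorem is literally the subtracted surrogate $e^{-yh_{\mathtt{bck}}(\boldsymbol{x})}$ after this factorization.

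This leaves two concrete gaps in your route. First, the sign-flip step: on $\{yh_b(\boldsymbol{x})>0\}$ the condition $yh_{\mathtt{hyb}}(\boldsymbol{x}')\le 0$ is indeed equivalent to $h_b(\boldsymbol{x})h_{\mathtt{hyb}}(\boldsymbol{x}')\le 0$, but applying $\mathbf{1}\{z\le 0\}\le e^{-z}$ with $z=h_b(\boldsymbol{x})h_{\mathtt{hyb}}(\boldsymbol{x}')$ produces the factor $e^{-h_b(\boldsymbol{x})h_d(\boldsymbol{x}')}$, and there is no inequality in either direction between this and the required $e^{-yh_d(\boldsymbol{x}')}$: the event $yh_b(\boldsymbol{x})>0$ fixes only the sign of $h_b(\boldsymbol{x})$, not its magnitude, so $h_b(\boldsymbol{x})h_d(\boldsymbol{x}')$ can be arbitrarily larger or smaller than $yh_d(\boldsymbol{x}')$. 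You flag this as the ``delicate part,'' but it is not a technicality to be smoothed over; it is precisely why the paper never applies the re-anchoring to the dilation factor. Second, by keeping $\mathcal{R}_{\mathtt{std}}(h_{\mathtt{bck}})$ as an exact indicator error you remove the only source of the ``$-1$'': your boundary term is a nonnegative expectation of indicators, and bounding it by $\mathbb{E}[\max_{\boldsymbol{x}'}e^{-yh_b(\boldsymbol{x})}e^{-h_b(\boldsymbol{x})h_b(\boldsymbol{x}')}e^{-yh_d(\boldsymbol{x}')}]$ yields the theorem \emph{without} the ``$-1$,'' which is a strictly weaker statement. The add-and-subtract of $\mathbb{E}[e^{-yh_b(\boldsymbol{x})}]$ that you gesture at nets to zero and cannot manufacture the missing $-\mathbb{E}[e^{-yh_b(\boldsymbol{x})}]$. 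As written, the proposal therefore proves at best a weaker inequality and leaves the dilation factor in the wrong form; to land on the stated bound you should follow the paper in surrogating the standard error as well and in factoring $e^{-yh_{\mathtt{hyb}}(\boldsymbol{x}')}$ \emph{before} any sign manipulation.
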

        
        \begin{proof}
            In Theorem \ref{theorem:supp:adv-error-bound:main}, we directly compare the adversarial error $\mathcal{R}_{\mathtt{adv}}$ of $h_{\mathtt{hyb}}$ and the standard error $\mathcal{R}_{\mathtt{std}}$ of $h_{\mathtt{bck}}$. Formally, it can be written as
            \begin{align}
                \mathcal{R}_{\mathtt{adv}}&(h_{\mathtt{hyb}}) - \mathcal{R}_{\mathtt{std}}(h_{\mathtt{bck}}) \\
                =& \mathbb{E} \left[ \mathbf{1} ( \exists \boldsymbol{x}' \in B_p(\boldsymbol{x}, \varepsilon),\; \textbf{s.t.} \;\;\textbf{} y h_{\mathtt{hyb}}(\boldsymbol{x}') \leq 0 ) \right] - \mathbb{E} \left[ \mathbf{1} ( y h_{\mathtt{bck}}(\boldsymbol{x}) \leq 0 ) \right] \\
                \leq& \mathbb{E} \left[ \max_{\boldsymbol{x}' B_p(\boldsymbol{x}, \varepsilon)} e^{-y h_{\mathtt{hyb}}(\boldsymbol{x}')} \right] - \mathbb{E} \left[ e^{-y h_{\mathtt{bck}}(\boldsymbol{x})} \right] \\
                =& \mathbb{E} \left[ \max_{\boldsymbol{x}' B_p(\boldsymbol{x}, \varepsilon)} e^{-y h_{\mathtt{hyb}}(\boldsymbol{x}')} - e^{-y h_{\mathtt{bck}}(\boldsymbol{x})} \right]
            \end{align}
            By applying Lemma \ref{lemma:supp:adv-error-bound:adv-boundary}, we have
            \begin{equation}
                \mathcal{R}_{\mathtt{adv}}(h_{\mathtt{hyb}}) - \mathcal{R}_{\mathtt{std}}(h_{\mathtt{bck}})
                \leq \mathbb{E} \left[ \max_{\boldsymbol{x}' B_p(\boldsymbol{x}, \varepsilon)} e^{-y h_{\mathtt{b}}(\boldsymbol{x})} \left( e^{-h_{\mathtt{b}}(\boldsymbol{x})h_{\mathtt{b}}(\boldsymbol{x}')} e^{-y h_{\mathtt{w}}(\boldsymbol{x}')} - 1 \right) \right].
            \end{equation}
            Theorem \ref{theorem:supp:adv-error-bound:main} is proved.
        \end{proof}

\end{document}